\def\eqref#1{equation~\ref{#1}}
\def\1{\bm{1}}
\DeclareMathAlphabet{\mathsfit}{\encodingdefault}{\sfdefault}{m}{sl}
\SetMathAlphabet{\mathsfit}{bold}{\encodingdefault}{\sfdefault}{bx}{n}
\title{Time-Correlated Video Bridge Matching}
\author{Viacheslav Vasilev\\
Kandinsky Lab\\
Moscow, Russia \\
\texttt{viacheslav.vasilev@kandinskylab.ai} \\
\AND
Arseny Ivanov \\
AXXX; Applied AI Institute; HSE University\\
Moscow, Russia \\
\AND
Nikita Gushchin \\
Applied AI Institute; AXXX \\
Moscow, Russia \\
\And
Maria Kovaleva \\
Kandinsky Lab\\
Moscow, Russia \\
\And
Alexander Korotin \\
Applied AI Institute; AXXX \\
Moscow, Russia \\
}
\newcommand{\X}{\mathbf{X}}
\newcommand{\Z}{\mathbf{Z}}
\newcommand{\bPhi}{\mathbf{\Phi}}
\newcommand{\I}{\mathbf{I}}
\newcommand{\Y}{\mathbf{Y}}
\newcommand{\x}{\mathbf{x}}
\newcommand{\W}{\mathbf{W}}
\newcommand{\A}{\mathbf{A}}
\newcommand{\bv}{\mathbf{b}}
\newcommand{\Si}{\boldsymbol{\Sigma}}
\newcommand{\muvec}{\boldsymbol{\mu}}
\newtheorem{theorem}{Theorem}
\newtheorem{proposition}[theorem]{Proposition}
\begin{document}

\maketitle

\begin{abstract}
Diffusion models excel in noise-to-data generation tasks, providing a mapping from a Gaussian distribution to a more complex data distribution. However they struggle to model translations between complex distributions, limiting their effectiveness in data-to-data tasks. While Bridge Matching models address this by finding the translation between data distributions, their application to time-correlated data sequences remains unexplored. This is a critical limitation for video generation and manipulation tasks, where maintaining temporal coherence is particularly important. To address this gap, we propose Time-Correlated Video Bridge Matching (TCVBM), a framework that extends BM to time-correlated data sequences in the video domain. TCVBM explicitly models inter-sequence dependencies within the diffusion bridge, directly incorporating temporal correlations into the sampling process. We compare our approach to classical methods based on bridge matching and diffusion models for three video-related tasks: frame interpolation, image-to-video generation, and  video super-resolution. TCVBM achieves superior performance across multiple quantitative metrics, demonstrating enhanced generation quality and reconstruction fidelity.
\end{abstract}

\section{Introduction}

Diffusion models~\citep{pmlr-v37-sohl-dickstein15, 10.5555/3495724.3496298, song2021scorebased} have emerged as a powerful paradigm for generative modeling, achieving remarkable results in high-fidelity data synthesis~\citep{saharia2022photorealistictexttoimagediffusionmodels, rombach2022highresolutionimagesynthesislatent, vladimir-etal-2024-kandinsky, flux2024, arkhipkin2025kandinsky50familyfoundation}. By iteratively denoising samples from a Gaussian distribution, these models excel at producing diverse and realistic outputs. However, despite their widespread adoption, diffusion models exhibit a critical limitation: they struggle to model translations between complex-structured data distributions. This shortcoming hinders their effectiveness in data-to-data tasks, where smooth bridging of the distributions is essential.

In contrast, Bridge Matching (BM) offers a principled solution to this problem by explicitly constructing a translation between arbitrary data distributions~\citep{peluchetti2023non, peluchetti2023diffusion, liu2022let, zhou2023denoising}. These methods learn vector field that connects source and target distributions, demonstrating strong performance in image-to-image tasks ~\citep{shi2023diffusion, i2sb}. 
However, the translation between independent data sequences, the components of which are time-correlated, remains unaddressed. At the same time, this is exactly what video data is, where one data sample is a sequence of correlated frames. While existing methods assume to operate with video data samples without considering their internal structure ~\citep{wang2025framebridge}, this omission can lead to a decrease in temporal consistency in the generated video.

\textbf{Contribution.} To address these issues, we propose Time Correlated Video Bridge Matching (TCVBM), a novel framework that extends Bridge Matching to time-dependent video data. Unlike prior work, TCVBM explicitly takes into account inter-sequence dependencies, ensuring faithful translation between source and target distributions (Figure~\ref{fig:main_scheme}). We evaluate TCVBM on three video-related tasks: frame interpolation, image-to-video generation, and video super resolution. A comparison with classical approaches that do not take into account the correlation between frames, such as DDPM~\citep{10.5555/3495724.3496298}, DDIM~\citep{song2021denoising}, and Bridge Matching, demonstrates that TCVBM provides better temporal consistency and reconstruction quality.

\section{Related Works}

\subsection{Bridge Models}\label{sec:related_dbm}

Despite the success of diffusion models in generative tasks~\citep{pmlr-v37-sohl-dickstein15, 10.5555/3495724.3496298, song2021scorebased}, their reliance on Gaussian noise as a prior lacks meaningful structural information about the data. In contrast, models that match velocity fields using pre-defined transport maps can achieve competitive performance~\citep{lipman2023flow}. Bridge Matching offers a particularly flexible framework, outperforming standard diffusion for tasks like image restoration, translation, and reconstruction~\citep{delbracio2024inversiondirectiterationalternative, zhoudenoising, i2sb}. However, applying Bridge Matching to correlated sequential data, such as video, remains largely unexplored. A recent extension to image-to-video generation~\citep{wang2025framebridge} overlooked inherent temporal dependencies. Our approach addresses this by designing an interpolant that explicitly models the linear correlations between video frames.

\subsection{Temporal Modeling for Video Data}\label{sec:related_temporal}

Advances in video generation often adapt pre-trained image models by adding temporal modules like 3D convolutions or attention layers \citep{ho2022imagenvideohighdefinition, blattmann2023videoldm, arkhipkin2023fusionframesefficientarchitecturalaspects}. This architectural specialization continues with Diffusion Transformer-based video models \citep{chen2023pixartalpha, ma2025latte}. A key challenge is the computational cost of attention, addressed by techniques such as sparse attention and adaptive masking \citep{zhang2025fast, xi2025sparsevideogenacceleratingvideo, mikhailov2025nablanablaneighborhoodadaptiveblocklevel}. However, modeling temporal dependencies within the generative dynamics and SDE prior structure remains unexplored. These prior works are orthogonal to our contribution and compatible with our approach.

\subsection{Video Generation and Manipulation}\label{sec:related_manipulation}

\paragraph{Frame Interpolation} aims to synthesize middle frames from two inputs, ensuring smoothness and consistency. Traditional methods use optical flow~\citep{Niklaus_CVPR_2020, lee2020adacof, park2021ABME, huang2022rife} or convolutional features with attention~\citep{10030268, shi2022video, reda2022film}. Diffusion-based interpolation began with bidirectional masking~\citep{voleti2022MCVD} and advanced through conditional generation~\citep{10.1609/aaai.v38i2.27912}, cascaded refinement~\citep{jain2024videointerpolationdiffusionmodels}, adapted image-to-video models~\citep{wang2025generative}, and large-motion techniques~\citep{10.1007/978-3-031-72633-0_19}. However, these methods lack explicit modeling of inter-frame correlations. Event-based approaches~\citep{chen2025repurposingpretrainedvideodiffusion, zhang2025motionawaregenerativeframeinterpolation} add motion cues but also use standard diffusion without capturing temporal dependencies.

\paragraph{Image-to-Video Generation} creates a video from an input image, requiring consistent and accurate motion. Diffusion models have significantly advanced this field, producing high-quality results~\citep{2023i2vgenxl, xing2023dynamicrafter, shi2024motion, guo2023animatediff, guo2024i2vadaptergeneralimagetovideoadapter, 10815947, arkhipkin2025kandinsky50familyfoundation}. However, the standard noise-to-data diffusion process risks losing essential information from the input image. While some approaches address this within the diffusion framework~\citep{ren2024consisti2v, wu2023freeinit}, FrameBridge~\citep{wang2025framebridge} reformulates the task using Bridge Matching, treating the input image as a prior instead of Gaussian noise. This frames generation as a data-to-data task but does not account for the inherent structure and correlations between video frames. Our proposed method explicitly models these inter-frame correlations to sample from the bridge distribution.

\paragraph{Video Super Resolution} reconstructs high-resolution videos from low-resolution inputs. Diffusion models are applied for their strong generative prior, which synthesizes realistic details to overcome degradation. A key challenge is ensuring temporal coherence within the inherently stochastic diffusion process. Recent methods address this by introducing explicit spatiotemporal constraints, such as temporal layer integration and motion-guided losses~\citep{zhou2024upscaleavideo, yang2024motionguidedlatentdiffusiontemporally}. Meanwhile, bridge-matching methods have shown promise for image super-resolution~\citep{i2sb, gushchin2025inversebridgematchingdistillation}. In this work, we extend bridge matching to video super resolution, proposing a novel approach that explicitly models temporal coherence between frames.

\section{Background on Bridge Matching}\label{sec:background}

We briefly review the Bridge Matching framework~\citep{peluchetti2023non, peluchetti2023diffusion, liu2022let, shi2023diffusion}, which constructs diffusion processes for data translation, given a distribution of clean data \( p(\x_0) \) and corrupted data \( p(\x_T) \) on \( \mathbb{R}^D \). The goal is to model a stochastic process that transitions from \( \x_0 \sim p(\x_0) \) to \( \x_T \sim p(\x_T \mid \x_0) \), while incorporating a prior dynamics.

Consider a coupling \( p(\x_0, \x_T) = p(\x_0)\,p(\x_T \mid \x_0) \), and let the prior process be defined by the stochastic differential equation (SDE):
\begin{equation}
    d\x_t = f(\x_t, t)\,dt + g(t)\,d\W_t, \label{eq:prior}
\end{equation}
where \( f(\x_t, t) \) is a drift function, \( g(t) \) is a time-dependent noise scale, and \( \W_t \) is a standard Wiener process. For a fixed starting point \( \x_s \), we denote the marginal of the prior process at time \( t \) by \( q(\x_t \mid \x_s) \).

\paragraph{Bridge Distribution.} Given a pair \( (\x_0, \x_{t'}) \) from the prior, the posterior distribution of the process at time \( t < t' \), denoted as $q(\x_t \mid \x_0, \x_{t'}),$ is referred to as the \emph{bridge distribution}. Using Bayes’ rule, it is expressed as:
\[
q(\x_t \mid \x_0, \x_{t'}) = \frac{q(\x_{t'} \mid \x_t, \x_0)\, q(\x_t \mid \x_0)}{q(\x_{t'} \mid \x_0)}.
\]

\paragraph{Bridge Matching Dynamics.}
Bridge Matching aims to construct a stochastic process that interpolates between \( \x_T \) and \( \x_0 \) using a reverse-time SDE:
\[
d\x_t = \left\{ f(\x_t, t) - g^2(t)\,v^*(\x_t, t) \right\} dt + g(t)\, d\bar{\W}_t,
\]
where \( \bar{\W}_t \) is a standard Wiener process under time reversal \( t \leftarrow T - t \), and \( dt \) denotes a negative infinitesimal timestep. 

\paragraph{Learning Objective.}
The drift function \( v^*(\x_t, t) \) is approximated using the following optimization objective:
\begin{equation}
\label{eq:background-obj}
    \min_{\phi} \mathbb{E}_{\x_0, \x_T, t} \left[ \left\| v_\phi(\x_t, t) - \nabla_{\x_t} \log q(\x_t \mid \x_0) \right\|^2 \right],
\end{equation}
where \( \x_0 \sim p(\x_0) \), \( \x_T \sim p(\x_T \mid \x_0) \), and \( \x_t \sim q(\x_t \mid \x_0, \x_T) \) is sampled from the bridge distribution. Time \( t \) is sampled uniformly from the interval \( [0, T] \).

This formulation provides a principled way to learn drift functions that guide the translation of corrupted data samples from $p(\x_T)$ to clean data samples from $p(\x_0)$ through learned diffusion processes.

\section{Method}

\begin{figure}[t]
  \centering
    \includegraphics[width=\linewidth]{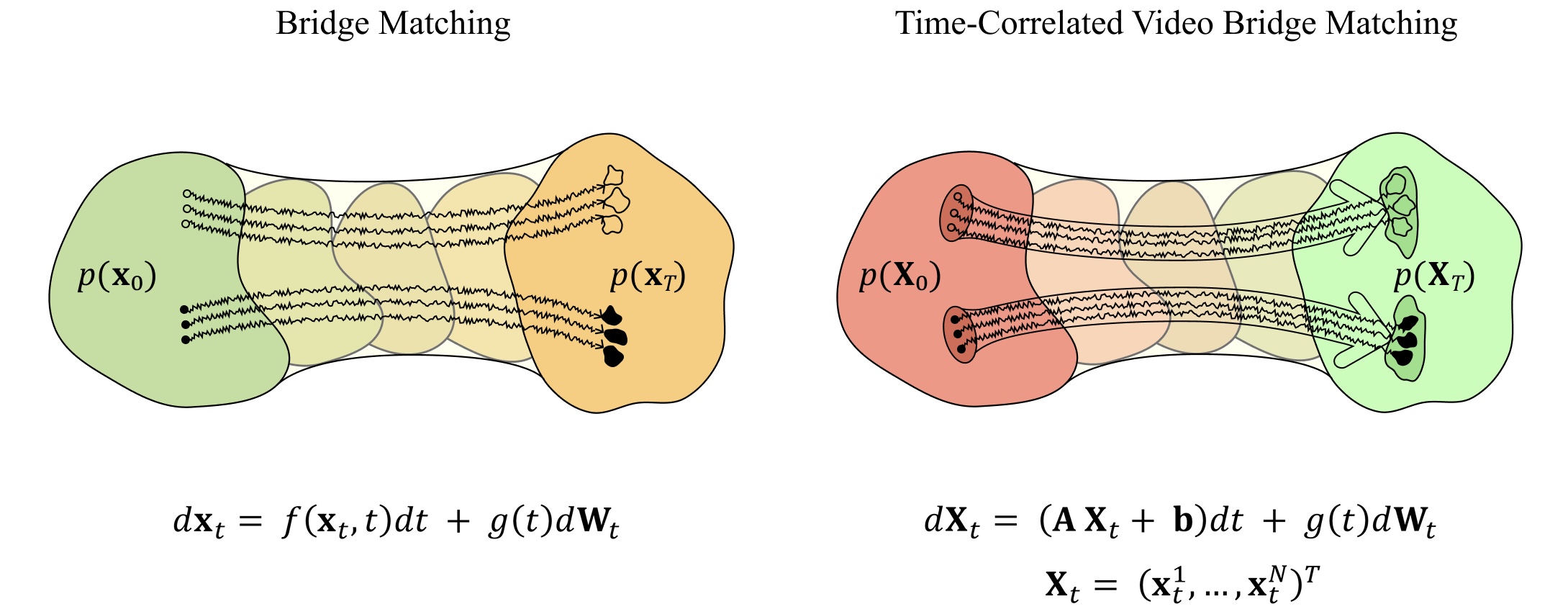}
    \caption[]{Comparison of the Bridge Matching and Time-Correlated Video Bridge Matching methods. The frames of the same video in the distributions $p(\x_0)$ and $p(\X_0)$ are indicated by dots of the same color. While Bridge Matching method makes the transition from one distribution to another without considering the relationship between frames, our approach treats one video as a single $\X_0$ data sequence and constructs the transition taking into account the internal correlation between video frames.}\label{fig:main_scheme}
\end{figure}

In this section, we introduce our proposed \textit{Time-Correlated Video Bridge Matching (TCVBM)} method for modeling video data sequences. The core idea is to incorporate temporal correlations directly into the prior diffusion process, enabling better coherence and reconstruction of sequential data (Figure~\ref{fig:main_scheme}). We provide formal derivations 
and defer all proofs to the supplementary material section ~\ref{appendix:proofs}.

\subsection{Time-Correlated Prior Process}\label{sec:prior_theory}

We consider sequences of length \( N \), represented as
\begin{equation}\label{eq:interframes}
    {\X} = (\x^1, \ldots, \x^N),
\end{equation}
where each \( \x^n \in \mathbb{R}^D \) for \( n = 1, \ldots, N \). 
We aim to define a prior diffusion process that imposes an inductive bias toward temporal smoothness across elements.

\paragraph{Column-wise independence across features.}
To model high-dimensional data efficiently, we assume that the $D$ feature dimensions evolve independently but share the same temporal dynamics. For each feature index \( d = 1, \ldots, D \), we define the time-dependent trajectory
\[
\x_t^{(d)} = \begin{bmatrix} x_t^{(d,1)} & \ldots & x_t^{(d,N)} \end{bmatrix}^\top \in \mathbb{R}^N,
\]
which evolves by a stochastic differential equation (SDE):
\[
d\x_t^{(d)} = \left( \A \x_t^{(d)} + \bv^{(d)} \right) dt + g(t)\, d\W_t^{(d)},
\]
where \( \A \in \mathbb{R}^{N \times N} \) is a symmetric, invertible matrix encoding temporal correlations, \( \bv^{(d)} \in \mathbb{R}^N \) is a drift correction term, and \( \W_t^{(d)} \) is a standard Wiener process.

\paragraph{Matrix form of the prior.}
Equivalently, the full intermediate sequence \( \X_t \in \mathbb{R}^{N \times D} \) evolves as:
\[
d\X_t = (\A \X_t + \bv) dt + g(t)\, d\W_t,
\]
where \( \bv = [\bv^{(1)} \,\ldots\, \bv^{(D)}] \in \mathbb{R}^{N \times D} \), and \( \W_t \in \mathbb{R}^{N \times D} \) is a matrix of independent Wiener processes across columns. In all expressions involving covariance and scores, formulas are applied column-wise. For example,
\[
\Si_t^{-1}(\X_t - \muvec_t) \in \mathbb{R}^{N \times D}
\]
denotes applying \( \Si_t^{-1} \in \mathbb{R}^{N \times N} \) independently to each column of \( \X_t - \muvec_t \). Further, unless otherwise stated, we will assume a time-independent noise scale $g(t) = \sqrt{\epsilon}$ for simplicity.

We now derive the transition and bridge distributions for this prior, which are essential for bridge matching.

\begin{proposition}[Correlated Process Score]\label{thm:correlated-process-score}
Let $\X_t$ follow the linear SDE:
\begin{equation}
d\X_t = (\A \X_t + \bv)\,dt + \sqrt{\epsilon}\, d\W_t, \quad \X_0 \sim \delta_{\X_0},
\end{equation}
then the marginal distribution of $\X_t$ is Gaussian:
\begin{equation}
q(\X_t|\X_0) = \mathcal{N}(\X_t \mid \muvec_{t|0}(\X_0), \Si_{t|0}),
\end{equation}
with
\begin{align}
\muvec_{t|0}(\X_0) &= e^{\A t} \X_0 + \left(e^{\A t} - I\right)\A^{-1}\bv, \\
\Si_{t|0} &= \epsilon\, \frac{e^{2\A t} - I}{2} \A^{-1}.
\end{align}
The score function is then given by
\begin{equation}
\nabla_{\X_t} \log q(\X_t | \X_0) = -\Si_{t|0}^{-1} (\X_t - \muvec_{t|0}(\X_0)).
\end{equation}
\end{proposition}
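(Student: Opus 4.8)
The plan is to treat the matrix SDE column-by-column (as the problem statement itself suggests) and reduce everything to the standard theory of linear Gaussian SDEs, namely the multivariate Ornstein--Uhlenbeck process. For a fixed feature index $d$, the process $\x_t^{(d)}$ satisfies $d\x_t^{(d)} = (\A\x_t^{(d)} + \bv^{(d)})\,dt + \sqrt{\epsilon}\,d\W_t^{(d)}$ with deterministic initial condition $\x_0^{(d)}$. First I would recall the variation-of-constants (integrating-factor) solution: multiplying by $e^{-\A t}$ and using that $\A$ is constant (so $e^{\A t}$ commutes with $\A$ and with itself at different times), one gets
\[
\x_t^{(d)} = e^{\A t}\x_0^{(d)} + \int_0^t e^{\A(t-s)}\bv^{(d)}\,ds + \sqrt{\epsilon}\int_0^t e^{\A(t-s)}\,d\W_s^{(d)}.
\]
Since the integrand in the stochastic integral is deterministic, $\x_t^{(d)}$ is Gaussian, so it suffices to compute its mean and covariance.

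For the mean, I would evaluate the deterministic integral $\int_0^t e^{\A(t-s)}\bv^{(d)}\,ds$. Because $\A$ is invertible, $\int_0^t e^{\A(t-s)}\,ds = \A^{-1}(e^{\A t} - I)$, which gives $\muvec_{t|0}(\X_0) = e^{\A t}\X_0 + (e^{\A t}-I)\A^{-1}\bv$ once we stack columns (using that $\A^{-1}$ and $e^{\A t}-I$ commute, so the order of the factors is immaterial). For the covariance, by the It\^o isometry,
\[
\Si_{t|0} = \epsilon\int_0^t e^{\A(t-s)}\bigl(e^{\A(t-s)}\bigr)^\top ds = \epsilon\int_0^t e^{2\A(t-s)}\,ds,
\]
where I used that $\A$ is symmetric, hence $e^{\A u}$ is symmetric and $e^{\A u}(e^{\A u})^\top = e^{2\A u}$. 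Substituting $u = t-s$ and integrating, $\int_0^t e^{2\A u}\,du = \tfrac12\A^{-1}(e^{2\A t} - I)$, giving $\Si_{t|0} = \epsilon\,\frac{e^{2\A t}-I}{2}\A^{-1}$, again using commutativity to match the stated form. Finally, for a Gaussian $\mathcal{N}(\muvec,\Si)$ the score is $\nabla\log q = -\Si^{-1}(\cdot - \muvec)$, which is applied column-wise exactly as in the matrix-form conventions fixed in the text; this yields the claimed score formula and also confirms that $\Si_{t|0}$ is invertible (its eigenvalues are $\epsilon\,\frac{e^{2\lambda t}-1}{2\lambda} > 0$ for each real eigenvalue $\lambda$ of $\A$, including the removable case $\lambda \to 0$).

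The only genuinely delicate points — and the main obstacle worth stating carefully — are the justification of the It\^o isometry / Gaussianity step and the systematic use of the fact that all the matrices appearing ($\A$, $\A^{-1}$, $e^{\A t}$, $e^{2\A t}-I$) are simultaneously diagonalizable and therefore commute, so the scalar-looking manipulations of $e^{\A t}$ are legitimate. I would make this rigorous by diagonalizing the symmetric matrix $\A = Q\Lambda Q^\top$ and noting that in the $Q$-coordinates the SDE decouples into $N$ independent scalar OU processes, each with an elementary closed-form Gaussian solution; transforming back recovers the stated formulas and shows $\Si_{t|0}$ is symmetric positive definite, hence invertible. Everything else is the routine bookkeeping of stacking the $D$ independent columns into the matrix $\X_t$ and invoking the column-wise conventions established before the proposition.
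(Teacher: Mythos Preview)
Your proposal is correct and follows essentially the same route as the paper: variation-of-constants to obtain $\X_t = e^{\A t}\X_0 + \int_0^t e^{\A(t-s)}\bv\,ds + \sqrt{\epsilon}\int_0^t e^{\A(t-s)}\,d\W_s$, then read off the mean from the deterministic part and the covariance via the It\^o isometry together with the symmetry of $\A$. Your additional remarks on simultaneous diagonalizability and on the positivity of the eigenvalues of $\Si_{t|0}$ go slightly beyond the paper's proof but are welcome clarifications rather than a different strategy.
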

To perform bridge matching, one also needs to be able to sample from $q(\X_t | \X_0, \X_T)$.
\begin{proposition}[Correlated Bridge Distribution]\label{thm:correlated-bridge-distribution}
Let $\X_t$ follow the same SDE as in Proposition~\ref{thm:correlated-process-score}. Then, given fixed endpoints \( \X_0 \) and \( \X_{t'} \), the posterior (bridge) distribution of \( \X_t \) is Gaussian:
\begin{equation}\label{eq:bridge-distr}
q(\X_t | \X_0, \X_{t'}) = \mathcal{N}(\X_t \mid \muvec_{t|0, t'}, \Si_{t|0, t'}),
\end{equation}
where
\begin{align}
\muvec_{t|0, t'} &= \muvec_{t|0}(\X_0) + \Si_{t|0} \Si_{t'|0}^{-1} (\X_{t'} - \muvec_{t'|0}(\X_0)), \\
\Si_{t|0, t'} &= \Si_{t|0} - \Si_{t|0} \Si_{t'|0}^{-1} \Si_{t|0}.
\end{align}
\end{proposition}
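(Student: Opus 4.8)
The plan is to derive both the Gaussianity and the explicit moments of $q(\X_t\mid\X_0,\X_{t'})$ directly from the Bayes'-rule identity recalled in Section~\ref{sec:background},
\[
q(\X_t\mid\X_0,\X_{t'})=\frac{q(\X_{t'}\mid\X_t,\X_0)\,q(\X_t\mid\X_0)}{q(\X_{t'}\mid\X_0)},
\]
using Proposition~\ref{thm:correlated-process-score} to supply every transition and marginal kernel. Since the $D$ feature columns of $\X_t$ are independent and governed by the same dynamics, it suffices to argue one column at a time, so I treat the process as $\mathbb{R}^N$-valued and all $\Si_{\cdot}$ and $e^{\A\cdot}$ as genuine $N\times N$ matrices acting on that column.

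First I would use the Markov property of the linear SDE to replace $q(\X_{t'}\mid\X_t,\X_0)$ by $q(\X_{t'}\mid\X_t)$, and then apply Proposition~\ref{thm:correlated-process-score} to the process restarted from $\X_t$ over the elapsed time $t'-t$ (legitimate because the SDE is time-homogeneous), obtaining $q(\X_{t'}\mid\X_t)=\mathcal N\big(\X_{t'}\mid e^{\A(t'-t)}\X_t+(e^{\A(t'-t)}-I)\A^{-1}\bv,\ \epsilon\,\tfrac{e^{2\A(t'-t)}-I}{2}\A^{-1}\big)$. Plugging this and $q(\X_t\mid\X_0)=\mathcal N(\X_t\mid\muvec_{t|0}(\X_0),\Si_{t|0})$ into the numerator and taking logarithms, $\log q(\X_t\mid\X_0,\X_{t'})$ becomes a quadratic form in $\X_t$ plus terms not depending on $\X_t$; this already proves the posterior is Gaussian, and reading off the quadratic and linear coefficients gives its precision and mean in closed, if unsimplified, form. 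An equivalent route is to note that $(\X_t,\X_{t'})$ is jointly Gaussian given $\X_0$ and apply the standard Gaussian-conditioning formula, with the cross-covariance $\operatorname{Cov}(\X_t,\X_{t'}\mid\X_0)$ read off from the mild solution $\X_t=e^{\A t}\X_0+(e^{\A t}-I)\A^{-1}\bv+\sqrt{\epsilon}\int_0^t e^{\A(t-s)}\,d\W_s$ via the It\^o isometry.

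The real content — and the step I expect to be the main obstacle — is the matrix algebra needed to collapse these expressions to the stated forms $\muvec_{t|0,t'}=\muvec_{t|0}(\X_0)+\Si_{t|0}\Si_{t'|0}^{-1}(\X_{t'}-\muvec_{t'|0}(\X_0))$ and $\Si_{t|0,t'}=\Si_{t|0}-\Si_{t|0}\Si_{t'|0}^{-1}\Si_{t|0}$. The key structural fact is symmetry of $\A$: then $\A$, $\A^{-1}$, every $e^{\A\tau}$, and every covariance $\Si_{\tau|0}$ are simultaneously orthogonally diagonalizable and mutually commute, so each matrix identity can be verified eigenvalue by eigenvalue — i.e.\ reduced to the corresponding scalar Ornstein--Uhlenbeck bridge — and then lifted back; invertibility of $\A$ and of $\Si_{\tau|0}$ for $\tau>0$ is what makes all the inverses meaningful. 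Concretely I would (i) establish the self-consistency identities $e^{\A(t'-t)}\Si_{t|0}\,e^{\A(t'-t)}+\epsilon\,\tfrac{e^{2\A(t'-t)}-I}{2}\A^{-1}=\Si_{t'|0}$ and $e^{\A(t'-t)}\muvec_{t|0}(\X_0)+(e^{\A(t'-t)}-I)\A^{-1}\bv=\muvec_{t'|0}(\X_0)$ by the exponent addition law $e^{\A s}e^{\A\tau}=e^{\A(s+\tau)}$ — these express that ``run to $t$, then from $t$ to $t'$'' reproduces the marginal at $t'$, and they are exactly what drives the Schur-complement / Woodbury rearrangement of the precision into the covariance form; (ii) deal with the drift term: since the posterior mean is affine in $(\X_0,\X_{t'})$, the $\bv$-dependent constant is forced by consistency, so only the coefficients of $\X_0$ and $\X_{t'}$ need to be tracked; and (iii) simplify the remaining products of matrix exponentials to the advertised combinations.

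Finally I would run the obvious sanity checks: the endpoint behaviour $\Si_{t|0,t'}\to 0$, $\muvec_{t|0,t'}\to\X_0$ as $t\to 0$ and $\Si_{t|0,t'}\to 0$, $\muvec_{t|0,t'}\to\X_{t'}$ as $t\to t'$, and the degenerate limit $\A\to 0$, where the formulas must reduce to the classical Brownian bridge $\muvec_{t|0,t'}=(1-\tfrac{t}{t'})\X_0+\tfrac{t}{t'}\X_{t'}$, $\Si_{t|0,t'}=\epsilon\,\tfrac{t(t'-t)}{t'}I$. The one place where the bookkeeping is delicate is tracking the drift-induced $e^{\A(t'-t)}$ factors that appear in $\operatorname{Cov}(\X_t,\X_{t'}\mid\X_0)$ and confirming that they combine to precisely $\Si_{t|0}\Si_{t'|0}^{-1}$.
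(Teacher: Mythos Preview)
Your ``equivalent route''---form the joint Gaussian of $(\X_t,\X_{t'})$ given $\X_0$ and apply the block-conditioning formula---is exactly what the paper does, and it is essentially the whole proof: compute the cross-covariance $\Si_{t,t'|0}=\tfrac{\epsilon}{2}\A^{-1}\bigl(e^{\A(t+t')}-e^{\A(t'-t)}\bigr)$ via the It\^o isometry applied to the mild solution, write down the $2\times 2$ block covariance, and quote the textbook conditioning identity. The paper does not touch Bayes' rule, completion of the square, simultaneous diagonalization, or sanity checks; your primary route would work too but is longer than needed.

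The real discrepancy is in the step you flag as the main obstacle: collapsing the bridge coefficient to $\Si_{t|0}\Si_{t'|0}^{-1}$. The paper's proof never performs this reduction---it stops at
\[
\muvec_{t|0,t'}=\muvec_{t|0}+\Si_{t,t'|0}\,\Si_{t'|0}^{-1}(\X_{t'}-\muvec_{t'|0}),
\qquad
\Si_{t|0,t'}=\Si_{t|0}-\Si_{t,t'|0}\,\Si_{t'|0}^{-1}\Si_{t,t'|0},
\]
with the cross-covariance $\Si_{t,t'|0}$ in place. Your instinct that the $e^{\A(t'-t)}$ factor is ``delicate'' is exactly right: since $\Si_{t,t'|0}=e^{\A(t'-t)}\Si_{t|0}$, the identity $\Si_{t,t'|0}=\Si_{t|0}$ holds only when $\A=0$, so the simplification you are aiming for is impossible for nonzero $\A$. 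The mismatch between the proposition's $\Si_{t|0}$ and the proof's $\Si_{t,t'|0}$ is a typo in the paper, not a cancellation you are failing to spot; had you pushed the algebra through you would have hit this wall rather than recovered the stated formula.
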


Together, Propositions~\ref{thm:correlated-process-score} and~\ref{thm:correlated-bridge-distribution} provide closed-form expressions required to implement bridge matching under the time-correlated prior.

\subsection{Time-Correlated Video Bridge Matching}\label{sec:tcvbm_theory}

\paragraph{Training.} To train a bridge matching model, we follow the general framework of Bridge Matching described in section~\ref{sec:background}. We assume access to clean samples \( \X_0 \sim p_0(\X_0) \), and a degradation process \( p(\X_T|\X_0) \), together forming a coupling \( p(\X_0, \X_T) = p_0(\X_0)p(\X_T|\X_0) \). 

We aim to minimize the squared error between the predicted score function \( v_\phi(\X_t, t) \) and the score of prior process \( \nabla_{\X_t} \log p(\X_t | \X_0) \), averaged over bridge samples \( \X_t \sim p(\X_t|\X_0, \X_T) \):
\begin{equation}
    \label{eq:initial-obj}
    \min_{\phi} \mathbb{E}_{\X_0, \X_t, t} \left[ \left\| v_\phi(\X_t, t) + \Si_{t|0}^{-1} (\X_t - \muvec_{t|0}(\X_0)) \right\|^2 \right],
\end{equation}
where \( t \sim \text{Uniform}(0, T) \).

This objective can be simplified by reparameterizing the score function in terms of an intermediate predictor:

\begin{proposition}[Reparameterization of the drift function]
\label{thm:reparam}
The minimizer \( v^*(\X_t, t) \) of the objective \eqref{eq:initial-obj} can be expressed as:
\[
v^*(\X_t, t) = -\Si_{t|0}^{-1} \left(\X_t - \muvec_{t|0}(\widehat{\X}_0^*(\X_t, t))\right),
\]
where \( \widehat{\X}_0^*(\X_t, t) \) is the solution to the regression problem:
\begin{equation}
    \label{eq:final-obj}
    \min_{\phi} \mathbb{E}_{\X_0, \X_t, t}  \left[\|\widehat{\X}_0^{\phi}(\X_t, t) - \X_0\|^2 \right].
\end{equation}
Thus, learning the score function reduces to learning a predictor for the clean data \( \X_0 \).
\end{proposition}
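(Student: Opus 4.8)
The plan is to reduce everything to the classical fact that the $L^2$-optimal regressor is a conditional expectation, and then to exploit that the conditional mean $\muvec_{t|0}(\cdot)$ from Proposition~\ref{thm:correlated-process-score} is \emph{affine} in $\X_0$, so that it commutes with conditioning.

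First, I would fix $t$ and view the objective \eqref{eq:initial-obj} pointwise. Writing $u(\X_0,\X_t,t) \eqdef -\Si_{t|0}^{-1}(\X_t - \muvec_{t|0}(\X_0))$, the inner expectation is $\mathbb{E}\big[\|v_\phi(\X_t,t) - u(\X_0,\X_t,t)\|^2\big]$ over the joint law of $(\X_0,\X_T,\X_t)$ induced by the coupling $p(\X_0,\X_T)$ and the bridge $p(\X_t\mid\X_0,\X_T)$. By the orthogonality decomposition $\|v_\phi - u\|^2 = \|v_\phi - \mathbb{E}[u\mid\X_t,t]\|^2 + \|\mathbb{E}[u\mid\X_t,t] - u\|^2 + (\text{cross term})$, where the cross term has zero expectation, the unconstrained minimizer over measurable functions is $v^*(\X_t,t) = \mathbb{E}\big[u(\X_0,\X_t,t)\mid \X_t,t\big]$, with the conditioning marginalizing out both $\X_0$ and $\X_T$.

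Next, since $\Si_{t|0}$ depends only on $t$ and $\X_t$ is held fixed under the conditioning, I would pull these outside the expectation to obtain $v^*(\X_t,t) = -\Si_{t|0}^{-1}\big(\X_t - \mathbb{E}[\muvec_{t|0}(\X_0)\mid\X_t,t]\big)$. By Proposition~\ref{thm:correlated-process-score}, $\muvec_{t|0}(\X_0) = e^{\A t}\X_0 + (e^{\A t}-I)\A^{-1}\bv$, with $e^{\A t}$ acting column-wise; this is affine in $\X_0$, hence commutes with the linear conditional-expectation operator, giving $\mathbb{E}[\muvec_{t|0}(\X_0)\mid\X_t,t] = \muvec_{t|0}\big(\mathbb{E}[\X_0\mid\X_t,t]\big)$.

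Finally, applying the same $L^2$-minimizer characterization to the regression objective \eqref{eq:final-obj} identifies its minimizer as $\widehat{\X}_0^*(\X_t,t) = \mathbb{E}[\X_0\mid\X_t,t]$. Substituting back yields $v^*(\X_t,t) = -\Si_{t|0}^{-1}\big(\X_t - \muvec_{t|0}(\widehat{\X}_0^*(\X_t,t))\big)$, as claimed. I expect the only delicate point to be bookkeeping: ensuring all expectations are over the same joint distribution (so that $\X_T$ is integrated out correctly), and that the column-wise action of $\A$ and $\Si_{t|0}$ on elements of $\mathbb{R}^{N\times D}$ is respected, so that "affine in $\X_0$" and "pull $\Si_{t|0}^{-1}$ out of the conditional expectation" are literally valid at the matrix level — routine, but easy to get wrong.
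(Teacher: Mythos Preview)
Your proposal is correct and rests on the same two ingredients as the paper: the $L^2$-optimal regressor is the conditional expectation, and $\muvec_{t|0}(\X_0)=e^{\A t}\X_0+(e^{\A t}-I)\A^{-1}\bv$ is affine in $\X_0$ so that conditioning passes through it. The route differs in execution. The paper first \emph{substitutes} the ansatz $v_\phi=-\Si_{t|0}^{-1}\big(\X_t-\muvec_{t|0}(\widehat{\X}_0^\phi)\big)$ into \eqref{eq:initial-obj}, simplifies the resulting weighted quadratic in $\widehat{\X}_0^\phi$, sets a first-order condition in $\phi$, and then argues pointwise in $\X_t$ (using invertibility of $\Si_{t|0}$ and $e^{\A t}$) to conclude $\widehat{\X}_0^\phi=\mathbb{E}[\X_0\mid\X_t]$. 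You instead characterize $v^*$ directly as $\mathbb{E}[u\mid\X_t,t]$ via the orthogonality decomposition and only afterwards read off the reparameterized form. Your route is a bit cleaner: it avoids the gradient computation and the somewhat informal passage from the integrated first-order condition $\mathbb{E}[\cdot]=0$ to its pointwise-in-$\X_t$ version that the paper performs. The paper's route, on the other hand, makes explicit that the map $\widehat{\X}_0\mapsto v$ is a bijective reparameterization (via invertibility of $e^{\A t}$), which you use implicitly when pulling $\muvec_{t|0}$ through the conditional expectation.
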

We parameterize the predictor \( \widehat{\X}_0^{\phi}(\X_t, t) \) with a neural network and train it using the regression loss in \eqref{eq:final-obj}. The training procedure is summarized in Algorithm~\ref{alg:training}.

\paragraph{Inference.} At inference time, given a corrupted sequence \( \X_T \sim p(\X_T) \), we perform iterative denoising using the learned predictor and the time-correlated bridge distribution. Given a schedule \( 0 = t_0 < t_1 < \cdots < t_N = T \), we iteratively refine the estimate of \( \X_0 \) by sampling from posterior:
\[
\X_{t_{n-1}} \sim p(\X_{t_{n-1}} \mid \widehat{\X}_0, \X_{t_n}),
\]
where \( \widehat{\X}_0 = \widehat{\X}_0^{\phi}(\X_{t_n}, t_n) \) obtained by using prediction of the trained model. This process is detailed in Algorithm~\ref{alg:inference}.

\begin{algorithm}[t]
  \caption{Training}
  \label{alg:training}
  \begin{algorithmic}[1]
    \REQUIRE data from coupling $p_0(\X_0)p_T(\X_T|\X_0)$ and coefficients $\A$, $\bv$ and $\epsilon$ for prior \eqref{eq:prior}.
    \REPEAT
      \STATE $t \sim \mathcal{U}([0,1]), \X_0 \sim p_0(\X_0), \X_T \sim p(\X_T \mid \X_0)$
      \STATE $\X_t \sim q\bigl(\X_t \mid \X_0, \X_T\bigr)$ \eqref{eq:bridge-distr}
      \STATE Take gradient descent step on $\X_0^{\phi}(\X_t, t)$ \eqref{eq:final-obj}
    \UNTIL{convergence}
  \end{algorithmic}
\end{algorithm}

\begin{algorithm}[t]
  \caption{Inference}
  \label{alg:inference}
  \begin{algorithmic}[1]
    \REQUIRE Input \( \X_T \sim p_T(\X_T) \), trained model \( \widehat{\X}_0^{\phi}(\cdot, \cdot) \), time schedule \( \{t_n\}_{n=0}^N \)
    \STATE Set \( \X_{t_N} \leftarrow \X_T \)
    \FOR{$n = N$ \textbf{to} $1$}
      \STATE Predict \( \widehat{\X}_0 \leftarrow \widehat{\X}_0^{\phi}(\X_{t_n}, t_n) \)
      \STATE Sample \( \X_{t_{n-1}} \sim p(\X_{t_{n-1}} \mid \widehat{\X}_0, \X_{t_n}) \) using \eqref{eq:bridge-distr}
    \ENDFOR
    \RETURN \( \X_0 = \X_{t_0} \)
  \end{algorithmic}
\end{algorithm}

\subsection{Choice of the Prior Process for Video Manipulation Tasks}

To encourage smooth transitions between consecutive elements of the sequence, we define the prior matrix \( \A \) of the size $N \times N$ with a tridiagonal structure:
\[
\A = \begin{bmatrix}
-2 & 1 & 0 & \cdots & 0 \\
1 & -2 & 1 & \ddots & \vdots \\
0 & \ddots & \ddots & \ddots & 0 \\
\vdots & \ddots & 1 & -2 & 1 \\
0 & \cdots & 0 & 1 & -2
\end{bmatrix}.
\]

Here, \( \A \) promotes temporal correlations across adjacent elements,
and the per-element prior is:
\begin{equation}
\label{eq:video-prior-per-element}
d\x_t^n = \big( (\x_t^{n-1} - \x_t^n) + (\x_t^{n+1} - \x_t^n) \big)\, dt + \sqrt{\epsilon}\, d\W_t,
\end{equation}
where \( n = 2, \dots, N - 1\). This formulation naturally encourages a linear relationship between the elements of the sequence. From the point of view of this approximation, each frame \( \x_t^n \) should remain close to the average of its neighbors \( \x_t^{n-1} \) and \( \x_t^{n+1} \). It is particularly well-suited for video-related tasks, where the frames of one video are correlated with each other, and the prior process enforces consistency and smoothness between them.

Depending on the video manipulation task, we suggest the following options for vector $\bv$ and equation \ref{eq:video-prior-per-element}:

\paragraph{Frame Interpolation.} In this case, we consider the sequence of length \( N + 2\), represented as
\[
{\widetilde{\X}} = (\x^0, \x^1, \ldots, \x^N, \x^{N+1}),
\]
where the endpoints \( \x^0 \) and \( \x^{N+1} \) are fixed as the initial and final frames of a video clip, between which it is necessary to make interpolation. The middle part of the video will be defined in the same way as in the equation~\ref{eq:interframes}:
\[
    {\X} = (\x^1, \ldots, \x^N),
\]
and all statements for \(\X_t\) from paragraphs~\ref{sec:prior_theory} and~\ref{sec:tcvbm_theory} remain valid. Considering in equation~\ref{eq:video-prior-per-element} for all \( t \) and \( n = 1, \ldots, N \) \( \x_t^0 = \x^0 \) and \( \x_t^{N+1} = \x^{N+1} \), we can define the vector $\bv$ as:
\[\bv = \begin{bmatrix}
\x^0, 0, \dots 0, \x^{N+1}
\end{bmatrix}^T,\quad \bv \in \mathbb{R}^{N \times D},\]
i.e. \( \bv \) enforces the boundary conditions from the known fixed endpoints \( \x^0 \) and \( \x^{N+1} \).

\paragraph{Image-to-Video Generation.} This task corresponds to the case of one fixed left point of the video sequence, i.e.:
\[
{\widetilde{\X}} = (\x^0, \x^1, \ldots, \x^N), \quad \widetilde{\X} \in \mathbb{R}^{(N+1) \times D}\\
\]
\[\bv = \begin{bmatrix}
\x^0, 0, \dots 0
\end{bmatrix}^T, \quad \bv \in \mathbb{R}^{N \times D},\]
and in the equation~\ref{eq:video-prior-per-element} for all $n$ and $t$ \( \x_t^0 = \x^0 \) and \( \x_t^{N+1} = 0\).

\paragraph{Video Super Resolution.} This is the simplest case, where \( \widetilde{\X} = \X \in \mathbb{R}^{N \times D} \), i.e. the endpoints are not fixed, and the vector $\bv$ is equal to zero.

\section{Experiments}

\subsection{Experiments Roadmap}

We compare TCVBM with DDPM~\citep{10.5555/3495724.3496298}, DDIM~\citep{song2021denoising}, and Bridge Matching (BM)~\citep[Chapter 9]{ibe2013markov}. Experiments use the MovingMNIST dataset~\citep{DBLP:journals/corr/SrivastavaMS15}, which contains $10,000$ video sequences each $20$ frames long and showing $2$ moving digits in a $64 \times 64$ resolution. Implementation details can be found in the Appendix~\ref{appendix:implementation_details}. Using each approach, we train three models for the following tasks:

\paragraph{Frame Interpolation.} 
The network inputs a sequence of $10$ frames, with the first and last frames fixed from the dataset. It generates $8$ middle frames, which are compared to the ground truth. DDPM and DDIM generate these frames from random noise. For BM and TCVBM, we tested initializing the middle frames via linear interpolation or Gaussian noise from $\mathcal{N}(\textbf{0}, \I)$; noise yielded better results and is used further. Details can be found in the Appendix~\ref{appendix:initialization_interpolation}.

\paragraph{Image-to-Video Generation.} The task is to generate the remaining $9$ frames based on the first frame. We provide the first frame as input data, along with $9$ Gaussian samples from $\mathcal{N}(\textbf{0}, \I)$, or $9$ copies of the first frame. The second option demonstrated the best results for all methods, so further we consider this initialization variant. For more information, please see the Appendix~\ref{appendix:initialization_i2v}.

\begin{figure}[t!]
\begin{subfigure}{.47\textwidth}
  \centering
  \includegraphics[width=\linewidth]{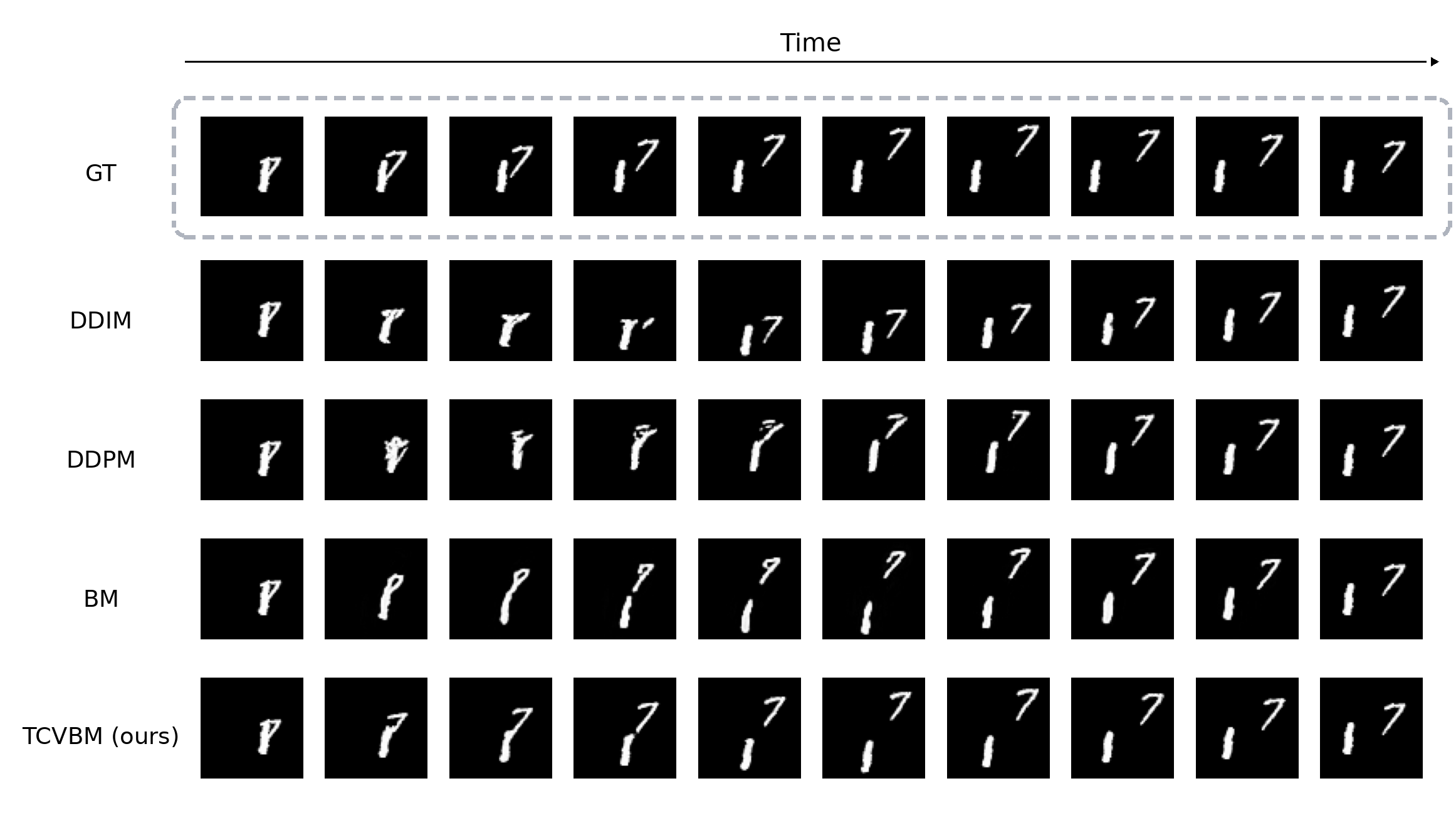}
  \caption{Frame interpolation. The first and last frames are fixed and come from the dataset. The task is to generate the intermediate $8$ frames. TCVBM provides a smoother and more consistent transition from frame to frame and a better detalization.}
  \label{fig:interpolation_results}
\end{subfigure}\hfill
\begin{subfigure}{.47\textwidth}
  \centering
  \includegraphics[width=\linewidth]{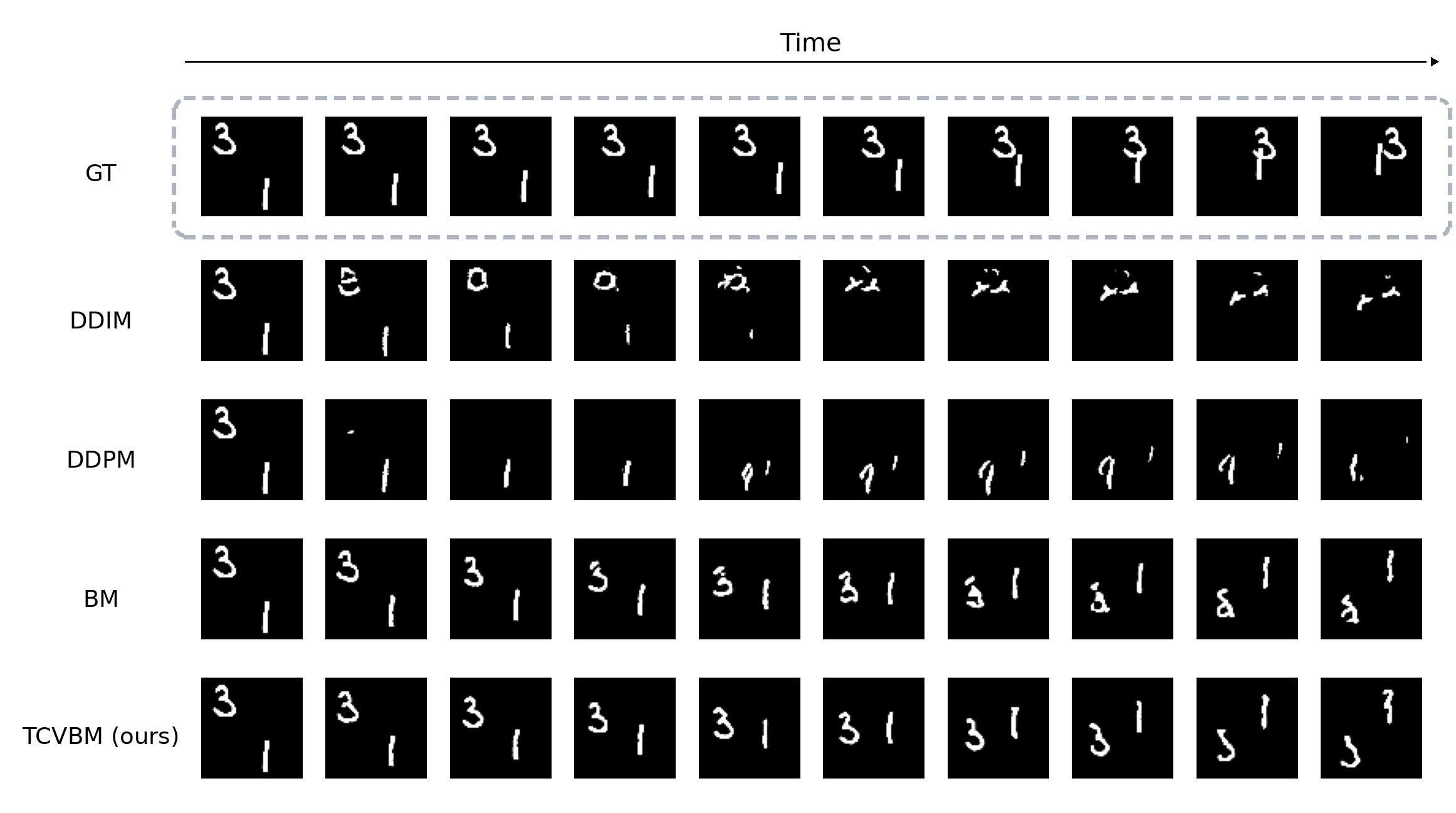}
  \caption{Image-to-Video generation. Based on the first frame, the method generate the next $9$ frames. As can be seen, TCVBM makes it possible to achieve better appearance of the generations and a lower quality degradation with increasing sequence length.}
  \label{fig:i2v_results}
\end{subfigure}
\begin{subfigure}{\textwidth}
  \centering
  \includegraphics[width=0.46\linewidth]{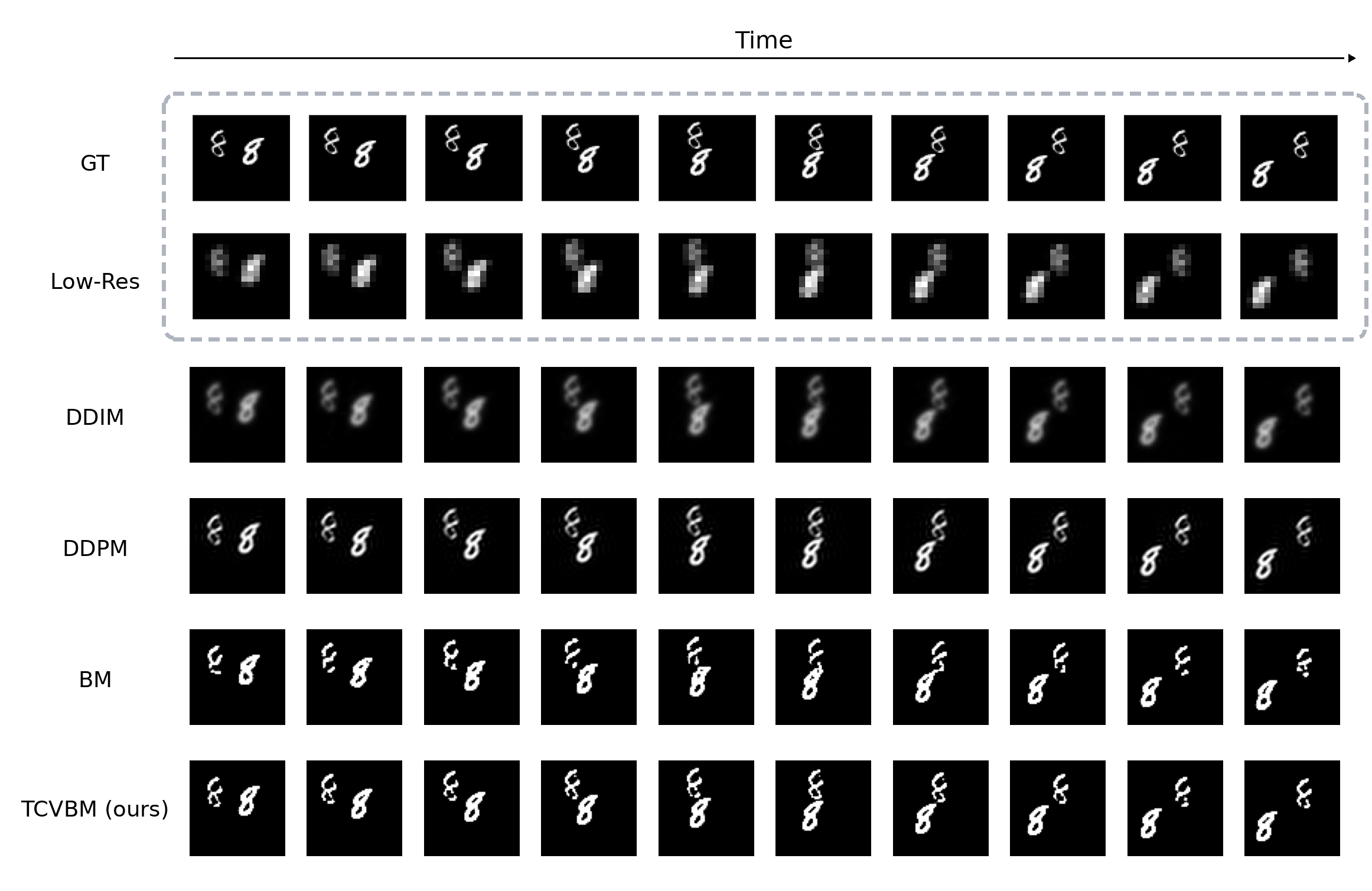}
  \caption{Video super resolution. The resolution is increased from $16 \times 16$ to $64 \times 64$. As can be seen, TCVBM outperforms other methods.}
  \label{fig:sr_results}
\end{subfigure}
\caption{The results on the MovingMNIST dataset~\cite{DBLP:journals/corr/SrivastavaMS15}.}
\label{fig:results}
\end{figure}

\paragraph{Video Super Resolution.} The network inputs $10$ low-resolution frames and outputs $10$ frames at the dataset resolution. We evaluated starting resolutions of $16 \times 16$ and $32 \times 32$; as the simpler task showed no significant advantage for our method at $32 \times 32$, we report results for $16 \times 16$. We also tested initialization with low-resolution data alone versus data concatenated with Gaussian noise; both performed similarly, and we proceed with the noise-free version. See details in Appendix~\ref{appendix:initialization_sr}.

We present results for BM and TCVBM with $\epsilon = 0.1$ and $\alpha = 1$ for TCVBM (i.e., $\widetilde{\A} = \A$, $\widetilde{\bv} = \bv$) for all three tasks. Experiments optimizing hyperparameters $\epsilon$ and $\alpha$ are in Appendix~\ref{appendix:hyperparameters}. Computational complexity analysis is in Appendix~\ref{appendix:comp_cost}. We also explored a dynamic correlation approach where $\alpha_t$ increases as $t \to 0$, but found no advantage over constant $\alpha$ (see Appendix~\ref{appendix:dynamical_correlation}).

\subsection{Results}\label{sec:results}

\paragraph{Qualitative Comparison.} As shown in Figure~\ref{fig:results}, TCVBM yields more consistent and accurate results than other methods. By modeling mutual correlation between frames, it ensures smoother transitions and better information transfer, improving overall stability. More examples can be found in the Appendix~\ref{appendix:examples}.

\begin{table}[h]
  \centering
  \caption{Frame interpolation quantitative results.}
    \begin{tabular}{lcccc}
    \hline
    \textbf{Metric} &
    \textbf{FVD $\downarrow$} &
    \textbf{LPIPS $\downarrow$} &
    \textbf{PSNR $\uparrow$} &
    \textbf{SSIM $\uparrow$} \\
    \hline
    DDIM  & 33.609   &  0.105  & 15.800  & 0.760   \\
    DDPM    & 32.404 & 0.117 & 14.937  &  0.741 \\
    BM  & 34.315 &  0.079 & 17.103  &  0.794 \\
    TCVBM (ours) & \textbf{30.542} & \textbf{0.077} & \textbf{17.280} & \textbf{0.813} \\
    \hline
    \end{tabular}\label{tab:interpolation_results}
\end{table}

\begin{table}[h!]
  \centering
  \caption{Image-to-Video generation quantitative results.}
    \begin{tabular}{lcccc}
    \hline
    \textbf{Metric} &
    \textbf{FVD $\downarrow$} &
    \textbf{LPIPS $\downarrow$} &
    \textbf{PSNR $\uparrow$} &
    \textbf{SSIM $\uparrow$} \\
    \hline
    DDIM  & 77.72  & 0.294  & \textbf{10.88} & \textbf{0.603}  \\
    DDPM    & 75.86 & 0.311 & 10.72  & 0.595  \\
    BM  &  49.32  & 0.271  &  10.63  &  0.579  \\
    TCVBM (ours)  & \textbf{44.96 }  & \textbf{0.258} & 10.75 & 0.591  \\
    \hline
    \end{tabular}\label{tab:i2v_results}
\end{table}

\begin{table}[h!]
  \centering
  \caption{Video super resolution quantitative results.}
    \begin{tabular}{lcccc}
    \hline
    \textbf{Metric} &
    \textbf{FVD $\downarrow$} &
    \textbf{LPIPS $\downarrow$} &
    \textbf{PSNR $\uparrow$} &
    \textbf{SSIM $\uparrow$} \\
    \hline
    DDIM  & 334.700 &  0.514& 17.307 &  0.613 \\
    DDPM    & 607.880 &  0.236 & 20.188 & 0.582\\
    BM  &  \textbf{53.516} &  \textbf{0.020} & 22.642 &  0.954 \\
    TCVBM (ours)  & 59.491  & \textbf{0.020} & \textbf{22.670} & \textbf{0.970} \\
    \hline
    \end{tabular}\label{tab:sr_results}
\end{table}

\paragraph{Quantitative Evaluation.} 
We perform quantitative evaluation on $500$ videos from our MovingMNIST validation set using FVD~\citep{unterthiner2019accurategenerativemodelsvideo}, LPIPS~\citep{zhang2018unreasonable}, PSNR, and SSIM. As shown in Tables \ref{tab:interpolation_results}, \ref{tab:i2v_results}, and \ref{tab:sr_results}, TCVBM outperforms other methods on most metrics. Additional results, including standard deviations across random seeds, are in Appendix~\ref{appendix:metrics}.


\section{Discussion}

\paragraph{Potential Impact.} The proposed Time-Correlated Video Bridge Matching (TCVBM) framework is designed for generative modeling and manipulation with sequential video data. Unlike traditional diffusion and bridge matching methods, which often ignore the intrinsic temporal structure of data, TCVBM explicitly models inter-sequence correlations. This principle are applicable not only to video but also to other types of sequences, such as audio signals or time series, where temporal consistency is important. The flexibility in defining prior process parameters, such as tridiagonal matrices for local correlations, allows for the method to be adapted to specific applications.

\paragraph{Limitations.} 
This work has several limitations. First, the predefined tridiagonal prior may be insufficient for complex, long-range, or non-linear temporal dependencies in real-world data; future work should explore more sophisticated interpolants, such as those modeling correlations in feature space. Second, due to the computational cost, empirical validation is currently limited to MovingMNIST dataset. Broader evaluation on large-scale real-world datasets and human assessment are needed to confirm its practical effectiveness and robustness.

\bibliography{iclr2026_delta}
\bibliographystyle{iclr2026_delta}

\appendix
\section{Proof of Propositions}\label{appendix:proofs}

\begin{proof}[Proof of Proposition~\ref{thm:correlated-process-score}]
Consider the linear SDE
$$
    d\X_t = (\A \X_t + \bv)dt + \sqrt{\epsilon}\, d\W_t, \quad \X_0 \sim \delta_{\X_0},
$$
with $\A\in\mathbb R^{D\times D}$ symmetric and invertible, $\bv \in\mathbb R^D$, and a $D$-dimensional standard Wiener process $\W_t$.

\textbf{Conditional mean.}
Let $\bPhi(t):=e^{\A t}$ and define $\Y_t:=(\bPhi(t))^{-1}\X_t=e^{-\A t}\X_t$ (note $\Y_0 = \X_0$).
\begin{gather}
    d\Y_t = d\big(e^{-\A t}\X_t\big) = e^{-\A t}\,d\X_t + d(e^{-\A t})\,\X_t = 
    \nonumber
    \\
    e^{-\A t}\big[(\A \X_t+\bv)\,dt+\sqrt{\epsilon}\,d\W_t\big] - \A e^{-\A t}\X_t\,dt 
    = e^{-\A t}\bv\,dt + \sqrt{\epsilon}\,e^{-\A t}d\W_t,
    \nonumber
\end{gather}
In the integral form:
\begin{gather}
    \Y_t = \X_0 + \int_0^t e^{-\A s}\bv\,ds + \sqrt{\epsilon}\int_0^t e^{-\A s}\,d\W_s.
    \nonumber
\end{gather}
Multiplying by $\bPhi(t)=e^{\A t}$ yields:
\begin{gather}
    \X_t = e^{\A t}\X_0 + \int_0^t e^{\A(t-s)}\bv ds + \sqrt{\epsilon}\int_0^t e^{\A(t-s)}d\W_s. 
    \label{eq:variation-of-constants}
\end{gather}
Hence, the conditional mean is:
\begin{gather}
    \muvec_{t|0}(\X_0)=e^{\A t}\X_0+\Big(\!\int_0^t e^{\A(t-s)}ds\Big)\bv
= e^{\A t}X_0+\big(e^{\A t}-I\big)\A^{-1}\bv, \label{eq:conditional-mean}
\end{gather}
\textbf{Conditional variance.} 
$$
    \I_t:=\int_0^t e^{A(t-s)}\,d\W_s,
    \qquad\text{so that}\qquad
    \X_t-\muvec_{t|0}(\X_0)=\sqrt{\epsilon}\,\I_t.
$$
$$
    \mathrm{Cov}(\X_t\mid \X_0)
= \mathbb E\big[(\X_t-\muvec_{t|0})(\X_t-\muvec_{t|0})^\top\mid \X_0\big]
= \epsilon\,\mathrm{Cov}(\I_t).
$$
In turn:
$$
    \mathrm{Cov}(\I_t)
=\mathbb E\!\left[\left(\int_{0}^{t} e^{\A(t-s)}\,d\W_s\right)
\left(\int_{0}^{t} e^{\A(t-r)}\,d\W_r\right)^{\!\top}\right].
$$
By Itô isometry:
$$
    \mathbb E\!\left[\int_0^t G_s\,d\W_s\right]=0,\qquad
\mathrm{Cov}\!\left(\int_0^t G_s\,d\W_s,\int_0^t H_s\,dW_s\right)=\int_0^t G_s H_s^\top\,ds.
$$
Taking $G_s=H_s=e^{\A(t-s)}$ gives
$$
    \mathrm{Cov}(\I_t)=\int_{0}^{t} e^{\A(t-s)}\,e^{\A^\top(t-s)}\,ds.
$$
Because $\A$ is symmetric, $e^{\A^\top u}=e^{\A u}$, hence
$$
    \mathrm{Cov}(\I_t)=\int_{0}^{t} e^{2\A(t-s)}\,ds
=\frac{1}{2}\big(e^{2\A t}-I\big)\A^{-1}.
$$
Consequently,
$$
    \Si_{t|0}=\mathrm{Cov}(\X_t\mid \X_0)
=\frac{\epsilon}{2}\big(e^{2\A t}-\I\big)\A^{-1}.
$$
Since $\X_t\mid \X_0$ is Gaussian with mean $\mu_{t|0}$ and covariance $\Sigma_{t|0}$, its score is
$$
    \nabla_{\X_t}\log q(\X_t\mid \X_0)=-\,\Si_{t|0}^{-1}\big(\X_t-\muvec_{t|0}(\X_0)\big).
$$
This completes the proof.
\end{proof}


\begin{proof}[Proof of Proposition~\ref{thm:correlated-bridge-distribution}]
\textbf{Step 1: Joint law from the prior.}
From \eqref{eq:variation-of-constants} and \eqref{eq:conditional-mean} in the proof of Proposition~\ref {thm:correlated-process-score}:
$$
    \X_u=\muvec_{u|0}(\X_0)+\sqrt{\epsilon}\int_0^u e^{\A(u-s)}d\W_s,
$$
$$
    \muvec_{u|0}(\X_0)=e^{\A u}\X_0+\big(e^{\A u}-\I\big)\A^{-1}\bv.
$$
Thus, conditionally on $\X_0$,
$$
    \mathbb E[\X_t\mid \X_0]=\muvec_{t|0}(\X_0),\qquad \mathbb E[\X_{t'}\mid \X_0]=\muvec_{t'|0}(\X_0),
$$
$$
    \Si_{t|0}=\frac{\epsilon}{2}\big(e^{2\A t}-\I\big)\A^{-1}, \qquad \Si_{t'|0}=\frac{\epsilon}{2}\big(e^{2\A t'}-\I\big)\A^{-1}.
$$
For the cross-covariance, using Itô isometry and independence of increments, for $t<t'$,
\begin{gather}
    \Si_{t,t'|0} =\mathrm{Cov}(\X_t,\X_{t'}\mid \X_0) =\epsilon \int_0^t e^{\A(t-s)}\,e^{\A(t'-s)}\,ds
    =\epsilon \int_0^t e^{\A(t+t'-2s)}\,ds = \frac{\epsilon}{2}\,\A^{-1}\!\left(e^{\A(t+t')}-e^{\A(t'-t)}\right).
    \nonumber
\end{gather}
Collecting blocks, we have the joint Gaussian (conditionally on $X_0$)
\begin{gather}
    \begin{bmatrix}\X_t\\ \X_{t'}\end{bmatrix}
    \sim \mathcal N\!\left(
    \begin{bmatrix}\muvec_{t|0}(\X_0)\\ \muvec_{t'|0}(\X_0)\end{bmatrix},
    \begin{bmatrix}
    \Si_{t|0} & \Si_{t,t'|0}\\
    \Si_{t,t'|0} & \Si_{t'|0}
    \end{bmatrix}
    \right).
    \nonumber
\end{gather}

\textbf{Step 2: Conditioning to obtain the bridge.}
For a joint Gaussian $\begin{bmatrix}x\\y\end{bmatrix}$ with blocks $(\mu_x,\mu_y,\Sigma_{xx},\Sigma_{yy},\Sigma_{xy})$, the conditional $x\mid y$ is \citep[Section 8.1.3]{petersen2008matrix}:
$$
    x\mid y \sim \mathcal N\big(\mu_x+\Sigma_{xy}\Sigma_{yy}^{-1}(y-\mu_y),
\Sigma_{xx}-\Sigma_{xy}\Sigma_{yy}^{-1}\Sigma_{yx}\big).
$$
Applying this with $x=\X_t$, $y=\X_{t'}$ and the blocks above gives:
\begin{gather}
    \muvec_{t\mid 0,t'} = \muvec_{t|0}(\X_0)+\Si_{t,t'|0}\,\Si_{t'|0}^{-1}\!\big(\X_{t'}-\muvec_{t'|0}(\X_0)\big),
    \nonumber
    \\
    \nonumber
\Si_{t\mid 0,t'} = \Si_{t|0}-\Si_{t,t'|0}\,\Si_{t'|0}^{-1}\Si_{t,t'|0}.
\nonumber
\end{gather}
Here
\begin{gather}
    \Si_{t|0}=\tfrac{\epsilon}{2}(e^{2\A t}-\I)\A^{-1},\quad
    \Si_{t'|0}=\tfrac{\epsilon}{2}(e^{2\A t'}-\I)\A^{-1},\quad
    \Si_{t,t'|0}=\tfrac{\epsilon}{2}A^{-1}\big(e^{\A(t+t')}-e^{\A(t'-t)}\big).
    \nonumber
\end{gather}
This completes the proof.
\end{proof}


\begin{proof}[Proof of Proposition~\ref{thm:reparam}]
    Consider the following bijective reparameterization:
    $$
        v_{\phi}(\X_t, t) =  -\Si_{t|0}^{-1} \left(\X_t - \muvec_{t|0}(\widehat{\X}_0^{\phi}(\X_t, t))\right)
    $$
    and substitute it in the optimization problem:
    \begin{equation}   
        \min_{\phi} \mathbb{E}_{\X_0, \X_t, t} \left[ \left\| v_\phi(\X_t, t) + \Si_{t|0}^{-1} (\X_t - \muvec_{t|0}(\X_0)) \right\|^2 \right],
        \nonumber
    \end{equation}
    \begin{gather}
        \min_{\phi} \mathbb{E}_{\X_0, \X_t, t} \left[ \left\| -\Si_{t|0}^{-1} \left(\X_t - \muvec_{t|0}(\widehat{\X}_0^{\phi}(\X_t, t))\right) + \Si_{t|0}^{-1} (\X_t - \muvec_{t|0}(\X_0)) \right\|^2 \right] = 
        \nonumber
        \\
        \min_{\phi} \mathbb{E}_{\X_0, \X_t, t} \left[ \left\| \Si_{t|0}^{-1} \left((\X_t - \muvec_{t|0}(\X_0))  -\left(\X_t - \muvec_{t|0}(\widehat{\X}_0^{\phi}(\X_t, t))\right)  \right) \right\|^2 \right] =
        \nonumber
        \\
        \min_{\phi} \mathbb{E}_{\X_0, \X_t, t} \left[ \left\| \Si_{t|0}^{-1} \left( \muvec_{t|0}(\widehat{\X}_0^{\phi}(\X_t, t)) - \muvec_{t|0}(\X_0) \right) \right\|^2 \right] =
        \nonumber
        \\
        \min_{\phi} \mathbb{E}_{\X_0, \X_t, t} \left[\left(\muvec_{t|0}(\widehat{\X}_0^{\phi}(\X_t, t)) - \muvec_{t|0}(\X_0)\right)^{\top} (\Si_{t|0}^{\top})^{-1} \Si_{t|0}^{-1} \left(\muvec_{t|0}(\widehat{\X}_0^{\phi}(\X_t, t)) - \muvec_{t|0}(\X_0) \right) \right]
        \nonumber
    \end{gather}
    
    Taking the gradient of this objective with respect to $\phi$, we obtain:
    \begin{gather}
        \mathbb{E}_{\X_0, \X_t, t} \left[2 (\Si_{t|0}^{\top})^{-1} \Si_{t|0}^{-1} \left(\muvec_{t|0}(\widehat{\X}_0^{\phi}(\X_t, t)) - \muvec_{t|0}(\X_0)  \right) \right] = 0
        \nonumber
    \end{gather}
    Since $\Si_{t|0}^{-1}$ is positive definite we can multiply by $\frac{1}{2}\Si_{t|0} (\Si_{t|0}^{\top})$ and get:
    \begin{gather}
        \mathbb{E}_{\X_0, \X_t, t} \left[\muvec_{t|0}(\widehat{\X}_0^{\phi}(\X_t, t)) - \muvec_{t|0}(\X_0) \right] = 0
        \nonumber
    \end{gather}
    Then for each $\X_t$ consider conditional mean:
    \begin{gather}
        \mathbb{E}_{\X_t, t} \left[ \mathbb{E}_{\X_0|\X_t} \left[\muvec_{t|0}(\widehat{\X}_0^{\phi}(\X_t, t)) - \muvec_{t|0}(\X_0) \right] \right] = 0
        \nonumber
    \end{gather}
    \begin{gather}
        \mathbb{E}_{\X_0|\X_t} \left[\muvec_{t|0}(\widehat{\X}_0^{\phi}(\X_t, t)) - \muvec_{t|0}(\X_0) \right] = 0
        \nonumber
    \end{gather}
    From \eqref{eq:conditional-mean}  we have:
    \begin{gather}
    \mu_{t|0}(\X_0)=e^{\A t}\X_0+\Big(\!\int_0^t e^{\A(t-s)}ds\Big)\bv  = e^{\A t}X_0+\big(e^{\A t}-I\big)\A^{-1}\bv, 
    \nonumber
    \end{gather}
    Then (note that $e^{\A t}$ is invertible and we can multiplu both sides on $e^{-At}$):
    \begin{gather}
        \mathbb{E}_{\X_0|\X_t} \left[ e^{\A t}\X_0^{\phi}(\X_t, t) +\big(e^{\A t}-I\big)\A^{-1}\bv -  e^{\A t}\X_0+\big(e^{\A t}-I\big)\A^{-1}\bv \right] = 0
        \nonumber
        \\
        \mathbb{E}_{\X_0|\X_t} \left[ e^{\A t}(\X_0^{\phi}(\X_t, t) - \X_0) \right] = 0
        \nonumber
        \\
        \mathbb{E}_{\X_0|\X_t} \left[\X_0^{\phi}(\X_t, t) - \X_0 \right] = 0
        \nonumber
        \\
        \X_0^{\phi}(\X_t, t) = \mathbb{E}_{\X_0|\X_t} \left[\X_0\right]
        \nonumber
    \end{gather}
    Hence, optimal $\X_0^* = \mathbb{E}_{\X_0|\X_t} \left[\X_0\right]$, which in turn is the minimizer of MSE problem:
    \begin{gather}
        \min_{\phi} \mathbb{E}_{\X_0, \X_t, t}  \left[\|\widehat{\X}_0^{\phi}(\X_t, t) - \X_0\|^2 \right].
        \nonumber
    \end{gather}
    
    By substituting in to $v_{\phi}$ we have:
    \begin{gather}
        v^*(\X_t, t) =  -\Si_{t|0}^{-1} \left(\X_t - \muvec_{t|0}(\widehat{\X}_0^{*}(\X_t, t))\right)
        \nonumber
    \end{gather}
    This completes the proof.
\end{proof}

\section{Implementation Details}\label{appendix:implementation_details}

To evaluate the performance of our method and compare it to other approaches, we take our own small and simple U-Net model based on several residual blocks with $2D$ convolutions. The model has approximately $8.7$ million parameters. 
For each experiment, we extract sub-sequences from the videos, consisting of $10$ consecutive frames, and concatenate them into a $10$-channel input for the neural network. We use $9,500$ training sequences and $500$ validation sequences, with a random split for each seed value. We train each model for $150,000$ iterations with a batch size of $128$ and an ema rate of $0.999$. We use the AdamW optimizer~\citep{loshchilov2018decoupled} with betas set to $0.9$ and $0.95$, a weight decay of $10^{-4}$, and a learning rate of $3 \times 10^{-5}$. The number of steps in the reverse process for all methods is equal to $1000$. All experiments are conducted using a single NVIDIA Tesla A$100$ GPU.

\section{Additional Quantitative Results}\label{appendix:metrics}

Here, we present the results of a quantitative comparison between DDPM, DDIM, Bridge Matching (BM), and our proposed method, TCVBM. The values of the standard deviation are provided, based on 3 runs of each method with different random seeds.

\begin{table}[H]
  \centering
  \caption{Frame interpolation quantitative results with standard deviation. The best values in column are bold, second best values are underlined.}
    \begin{tabular}{lcccc}
    \hline
    \textbf{Metric} &
    \textbf{FVD $\downarrow$} &
    \textbf{LPIPS $\downarrow$} &
    \textbf{PSNR $\uparrow$} &
    \textbf{SSIM $\uparrow$} \\
    \hline
    DDIM  & 34.664 $\pm$ 5.80 &  0.120 $\pm$ 0.070 & 15.843 $\pm$ 0.120 & 0.766 $\pm$  0.011\\
    DDPM    & \underline{33.612 $\pm$ 1.494} & 0.107 $\pm$ 0.009 & 14.509 $\pm$ 0.427 &  0.714 $\pm$ 0.024\\
    BM  & 34.766 $\pm$ 0.398 &  \underline{0.078 $\pm$ 0.001} & \underline{17.265 $\pm$ 0.390} &  \underline{0.789 $\pm$ 0.005}\\
    TCVBM (ours) & \textbf{31.491 $\pm$ 4.035} & \textbf{0.071 $\pm$ 0.019 } & \textbf{17.451 $\pm$ 0.459} & \textbf{0.825 $\pm$ 0.044} \\
    \hline
    \end{tabular}
\end{table}

\begin{table}[H]
  \centering
  \caption{Image-to-Video generation quantitative results with standard deviation. The best values in column are bold, second best values are underlined.}
    \begin{tabular}{lcccc}
    \hline
    \textbf{Metric} &
    \textbf{FVD $\downarrow$} &
    \textbf{LPIPS $\downarrow$} &
    \textbf{PSNR $\uparrow$} &
    \textbf{SSIM $\uparrow$} \\
    \hline
    DDIM  & 335.51 $\pm$ 241.12 & 0.402 $\pm$ 0.092 & 10.205 $\pm$ 0.514 & 0.513 $\pm$ 0.069\\
    DDPM  & 250.52 $\pm$ 134.99 & 0.383 $\pm$ 0.054 & 10.333 $\pm$ 0.275 & 0.530 $\pm$ 0.046 \\
    BM  &  \underline{48.54 $\pm$ 0.56} & \underline{0.268 $\pm$ 0.005} & \underline{10.627 $\pm$ 0.053} & \underline{0.582 $\pm$ 0.004}\\
    TCVBM (ours)  & \textbf{45.32 $\pm$ 0.91}  & \textbf{0.260 $\pm$ 0.002} & \textbf{10.710 $\pm$ 0.028} & \textbf{0.589 $\pm$ 0.001} \\
    \hline
    \end{tabular}
\end{table}

\begin{table}[H]
  \centering
  \caption{Video super resolution quantitative results with standard deviation. The best values in column are bold, second best values are underlined.}
    \begin{tabular}{lcccc}
    \hline
    \textbf{Metric} &
    \textbf{FVD $\downarrow$} &
    \textbf{LPIPS $\downarrow$} &
    \textbf{PSNR $\uparrow$} &
    \textbf{SSIM $\uparrow$} \\
    \hline
    DDIM  & 336.808 $\pm$ 5.175 & 0.520 $\pm$ 0.006 & 17.226 $\pm$ 0.103 & 0.600 $\pm$ 0.016 \\
    DDPM & 614.288 $\pm$ 6.289 & 0.237 $\pm$ 0.001 & 20.152 $\pm$ 0.050 & 0.577 $\pm$ 0.004\\
    BM  &  \textbf{29.710 $\pm$ 20.683} &  \textbf{0.026 $\pm$ 0.005} & \underline{21.412 $\pm$ 1.040} &  \underline{0.941 $\pm$ 0.012}\\
    TCVBM (ours)  & \underline{32.762 $\pm$ 23.153} & \underline{0.029 $\pm$ 0.004} & \textbf{21.431 $\pm$ 1.419} & \textbf{0.941 $\pm$ 0.011} \\
    \hline
    \end{tabular}
\end{table}

\section{Initialization Experiments}\label{appendix:initialization}

In this section, we explore options for initializing or representing input data for bridge-based methods used in our work, namely for Bridge Matching with Browninan Bridge (BM) and Time-Correlated Video Bridge Matching (TCVBM). The interest in exploring the effect of input data initialization on the quality of model performance stems primarily from the assumption that bridge-based approaches are better suited for data-to-data translation tasks.

\subsection{Frame Interpolation}\label{appendix:initialization_interpolation}

As input data for the network, we explored two options: filling in intermediate frames with Gaussian noise sampled from $\mathcal{N}(\textbf{0}, \textbf{1})$ and using linear interpolation between fixed boundary frames $\x^0$ and $x^N$, i.e.:
\[\x^n_{input} = \frac{n\x^0 + (N - n)x^N}{N},\quad n = 1, \dots, N-1.\]
Table~\ref{tab:init_interpolation} compares the results of these initialization methods. As can be seen, filling intermediate frames with noise from a normal distribution produces better results than the initial linear interpolation.

\begin{table}[H]
  \centering
  \caption{Analysis of the impact of initialization of input video data for bridge-based methods in the task of frame interpolation. Using noise from a normal distribution shows a clear advantage. The best values in column are bold, second best values are underlined.}
    \begin{tabular}{llcccc}
    \hline
    \textbf{Initialization method} & \textbf{Method} &
    \textbf{FVD $\downarrow$} &
    \textbf{LPIPS $\downarrow$} &
    \textbf{PSNR $\uparrow$} &
    \textbf{SSIM $\uparrow$} \\
    \hline
    Linear interpolation & BM  & 34.804 & 0.109 & 15.439 & 0.756 \\
    & TCVBM  & \underline{31.944} & 0.092 & 16.275 & 0.782 \\
    \hline
    Gaussian noise from $\mathcal{N}(\textbf{0}, \textbf{1})$ & BM & 34.315 & \underline{0.079} & \underline{17.103} & \underline{0.794} \\
    & TCVBM & \textbf{30.542} & \textbf{0.077} & \textbf{17.280} & \textbf{0.813}\\
    \hline
    \end{tabular}\label{tab:init_interpolation}
\end{table}

\subsection{Image-to-Video Generation}\label{appendix:initialization_i2v}

Here we compare the following two types of initial initialization: duplicating the first frame in place of the frames to be generated (static video) and using random noise everywhere except the first frame. Static video initialization is superior to the noise option for both models (Table~\ref{tab:initialization_i2v}).

\begin{table}[H]
  \centering
  \caption{Comparison of two types of initial initialization of input data for image-to-video generation.}
    \begin{tabular}{llcccc}
    \hline
    \textbf{Initialization method} & \textbf{Method} &
    \textbf{FVD $\downarrow$} &
    \textbf{LPIPS $\downarrow$} &
    \textbf{PSNR $\uparrow$} &
    \textbf{SSIM $\uparrow$} \\
    \hline
    Static video & BM & 49.32 & 0.271 & 10.63 & 0.579\\
    & TCVBM  & \textbf{44.96} & \textbf{0.258} & \textbf{10.75} & \textbf{0.591}\\
    \hline
    Gaussian noise from $\mathcal{N}(\textbf{0}, \textbf{1})$& BM & 52.57 & 0.287 & 10.61 & 0.568 \\
    & TCVBM & \underline{48.61} & \underline{0.263} & \underline{10.68} & \underline{0.587}\\
    \hline
    \end{tabular}\label{tab:initialization_i2v}
\end{table}

\subsection{Video Super Resolution}\label{appendix:initialization_sr}

\begin{table}[H]
  \centering
  \caption{Comparison of two types of initial initialization of input data for video super resolution. We perform this comparison for low-resolution $32\times32$.}
    \begin{tabular}{llcccc}
    \hline
    \textbf{Initialization method} & \textbf{Method} &
    \textbf{FVD $\downarrow$} &
    \textbf{LPIPS $\downarrow$} &
    \textbf{PSNR $\uparrow$} &
    \textbf{SSIM $\uparrow$} \\
    \hline
    Low-resolution video & BM & \underline{9.501} & \underline{0.014} & 24.888 & \underline{0.972}\\
    & TCVBM & \textbf{9.496} & \textbf{0.012} & 24.970 & \textbf{0.973}\\
    \hline
   Low-resolution video  & BM & 9.556 & \textbf{0.012} & \underline{24.892} & \textbf{0.973}\\
   concatenated with noise from $\mathcal{N}(\textbf{0}, \textbf{1})$ & TCVBM & 9.646 & \textbf{0.012} & \textbf{24.988} & \textbf{0.973}\\
    \hline
    \end{tabular}
\end{table}

\section{Hyperparameters Searching}\label{appendix:hyperparameters}

Here we compare different values of hyperparameters, namely the noise scaling value $\epsilon$ and the coefficient $\alpha$, which determines the degree of impact of the matrix $\A$ as $\widetilde{\A} := \alpha\A$. Tables~\ref{tab:hyper_interpolation} and ~\ref{tab:hyper_i2v} show that in the case of frame interpolation and image-to-video generation, it is impossible to identify a clear dependence of the generation quality on the hyperparameters used, however, a sufficient amount of noise and not large values for the $\alpha$ coefficient are optimal. The results for video super resolution in Table~\ref{tab:hyper_superres} demonstrate that small values of $\epsilon$ and $\alpha$ are optimal for this task, which does not contradict the results for frame interpolation.

\begin{table}[H]
    \centering
    \caption{\small The results of TCVBM training with various hyperparameters $\epsilon$ and $\alpha$ for frame interpolation. The best values in column are bold, second best values are underlined.}
    \begin{tabular}{llcccc}
    \hline
    \textbf{$\epsilon$} &
    \textbf{$\alpha$} &
    \textbf{FVD $\downarrow$} &
    \textbf{LPIPS $\downarrow$} &
    \textbf{PSNR $\uparrow$} &
    \textbf{SSIM $\uparrow$} \\
    \hline
    0.1 & 0.1 & 35.572 & 0.085 & 16.40 & 0.797\\
    0.1 & 1 & 36.542 & 0.089 & 16.37 & 0.797\\
    0.1 & 10 & \underline{29.792} & 0.086 & 16.56 & 0.801\\
    1 & 0.1 & \textbf{27.342} & 0.084 & 16.65 & 0.803\\
    1 & 1 & 30.542 & \textbf{0.077} & 16.86 & 0.813\\
    1 & 10 & 31.432 & \underline{0.080} & 17.12 & 0.817\\
    10 & 0.1 & 37.662 & 0.084 & \textbf{17.24} & \textbf{0.819}\\
    10 & 1 & 54.542 & 0.093 & \underline{17.17} & \underline{0.818}\\
    10 & 10 & 54.562 & 0.100 & \underline{17.17} & 0.769\\
    \hline
\end{tabular}\label{tab:hyper_interpolation}
\end{table}

\begin{table}[h!]
    \centering
    \caption{The results of TCVBM training with various hyperparameters $\epsilon$ and $\alpha$ for image-to-video generation. The best values in column are bold, second best values are underlined.}
    \begin{tabular}{llcccc}
            \hline
            \textbf{$\epsilon$} &
            \textbf{$\alpha$} &
            \textbf{FVD $\downarrow$} &
            \textbf{LPIPS $\downarrow$} &
            \textbf{PSNR $\uparrow$} &
            \textbf{SSIM $\uparrow$} \\
            \hline
            0.1 & 0.1 &  55.46	&  0.2670 & \textbf{10.80} & 0.587 \\
            0.1 & 1 &  51.62 & 0.2578 & 10.74 & 0.583 \\
            0.1 & 10 & 59.57 & \textbf{0.2571} & 10.67 & 0.585 \\
            1 & 0.1 & 51.80 & 0.2615 & 10.70 & 0.590 \\
            1 & 1 & 44.96 & \underline{0.2575} & \underline{10.75} & \underline{0.591} \\
            1 & 10 & 51.27 & 0.2613 & 10.60 & 0.583 \\
            10 & 0.1 & 44.90 & 0.2610 & 10.67 & 0.588 \\
            10 & 1 & \textbf{44.22} & 0.2584 & 10.70 & \textbf{0.591} \\
            10 & 10 & \underline{44.58} & 0.2597 & 10.58 & 0.583 \\
            \hline
    \end{tabular}\label{tab:hyper_i2v}
\end{table}

\begin{table}[H]
  \centering
  \caption{\small The results of TCVBM training with various hyperparameters $\epsilon$ and $\alpha$ for video super resolution. We perform this comparison for low-resolution $32\times32$.}
  \begin{tabular}{llcccc}
    \hline
    \textbf{$\epsilon$} &
    \textbf{$\alpha$} &
    \textbf{FVD $\downarrow$} &
    \textbf{LPIPS $\downarrow$} &
    \textbf{PSNR $\uparrow$} &
    \textbf{SSIM $\uparrow$} \\
    \hline
    0.1 & 0.1 & \textbf{9.496} & \textbf{0.012} & \textbf{24.970}	& \textbf{0.973}\\
    0.1 & 1 & \underline{10.413} & \underline{0.013} & \underline{24.358} & \underline{0.969}\\
    1 & 0.1 & 13.226 & 0.019 & 23.004 & 0.959\\
    1 & 1 & 15.023 & 0.022 & 22.120 & 0.949\\
    10 & 0.1 & 18.458 & 0.033 & 20.085 & 0.921\\
    10 & 1 & 20.746 & 0.039 & 19.283 & 0.906\\
    \hline
    \end{tabular}\label{tab:hyper_superres}
\end{table}

\section{Computational Cost Analysis}\label{appendix:comp_cost}

As described in Algorithm~2, inference consists of two alternating steps:
        (i) computing $\widehat{\mathbf{X}}_{0}^{\phi}(\mathbf{X}_{t_n}, t_n)$ and
        (ii) sampling from the Gaussian distribution given in Eq.~9.
        Sampling can be done using reparameterization:
        $$\mathbf{X}_{t'} = \mathbf{\mu}_{t|0, t'}(\mathbf{X}_0) + (\mathbf{\Sigma}_{t|0, t'}^{1/2})^{\top}\mathbf{Z}, \quad \mathbf{Z} \sim \mathcal{N}(0, I),
        $$
        which requires the evaluation of the mean and covariance:
        $$
        \mathbf{\mu}_{t|0, t'} = \mathbf{\mu}_{t|0}(\mathbf{X}_0) + \mathbf{\Sigma}_{t|0} \mathbf{\Sigma}_{t'|0}^{-1} (\mathbf{X}_{t'} - \mathbf{\mu}_{t'|0}(\mathbf{X}_0)), \quad
        $$
        $$
        \mathbf{\Sigma}_{t|0, t'} = \mathbf{\Sigma}_{t|0} - \mathbf{\Sigma}_{t|0} \mathbf{\Sigma}_{t'|0}^{-1} \mathbf{\Sigma}_{t|0}.
        $$
        In turn, 
        $$
            \mathbf{\mu}_{t|0}(\mathbf{X}_0) = e^{\mathbf{A} t} \mathbf{X}_0 + \left(e^{\mathbf{A} t} - I\right)\mathbf{A}^{-1}\mathbf{b}, \quad
            \mathbf{\Sigma}_{t|0} = \epsilon \frac{e^{2\A t} - I}{2} \mathbf{A}^{-1}.
        $$
        In total, almost all these operations can be cached except for 5 matrix multiplications: $(\mathbf{\Sigma}_{t|0, t'}^{1/2})^{\top}\mathbf{Z}$, $e^{\mathbf{A} t}\mathbf{X_0}$, $(\mathbf{\Sigma}_{t|0} \mathbf{\Sigma}_{t'|0}^{-1}) \mathbf{X}_{t'}$, $e^{\mathbf{A} t'}\mathbf{X_0}$, $(\mathbf{\Sigma}_{t|0} \mathbf{\Sigma}_{t'|0}^{-1}) (e^{\mathbf{A} t'}\mathbf{X_0})$.
        All these multiplications are for tensors of size $F \times F$ ($F$ is the number of frames in the video) with a tensor of size $F \times C \times H \times W$, where $C$ is the number of channels, $H$ is the height, and $W$ is the width of the video. This requires $O(F^2 C H W)$ operations. For comparison, a single convolutional layer with $C$ input channels, $C_{\text{out}}$ output channels, and kernel size $K \times K$ applied to all $F$ frames requires $O(F C H W K^2 C_{\text{out}})$ operations. Since typically $F \sim 10{-}100$, $K^2 \sim 10$, and $C_{\text{out}} \sim 10$, we have
        $$
            \frac{F^2 C H W}{F C H W K^2 C_{\text{out}}}
            = \frac{F}{K^2 C_{\text{out}}}
            \sim 0.1 - 1.
        $$
        Thus, each of these 5 operations is comparable to requiring less computation than 1 convolution layer. Since a neural network requires many convolutional layers and additional nonlinear processing, the resulting overhead of our bridge update is comparable to only a few convolutional layers and therefore remains relatively small in practice.

\section{Dynamical Correlation}\label{appendix:dynamical_correlation}

\subsection{Theory}
Consider prior SDE with an additional multiplicative function $f(t)$ depending only on time $t$:
$$
    d\X_t = f(t)(\A \X_t + \bv)dt + \sqrt{\epsilon}d\W_t.
$$
The derivation of formulas for this prior follows the same principles used for $f(t) = 1$ in Appendix~\ref{appendix:proofs}.

Define:
$$
    F(t)=\int_0^t f(\tau)\,d\tau,\qquad \bPhi(t)=e^{\A\,F(t)}.
$$
Then $\frac{d}{dt}\bPhi(t)=f(t)\,\A\,\bPhi(t)$, $\bPhi(0)=I$, and $\frac{d}{dt}\bPhi(t)^{-1}=-\,f(t)\,\bPhi(t)^{-1}\A$.

\textbf{Conditional mean.} Consider $\Y_t:=\bPhi(t)^{-1}\X_t$. By Itô’s rule:
\begin{gather}
    d\Y_t = \bPhi(t)^{-1} d\X_t + d(\bPhi(t)^{-1})\,\X_t 
      = \bPhi(t)^{-1}f(t)(\A \X_t+ \bv)\,dt + \sqrt{\epsilon}\,\bPhi(t)^{-1}d\W_t - f(t)\,\bPhi(t)^{-1}A \X_t\,dt =
      \nonumber
      \\
      = \bPhi(t)^{-1} f(t) \bv\,dt + \sqrt{\epsilon}\,\bPhi(t)^{-1} d\W_t.
      \nonumber
\end{gather}
Integrating from $0$ to $t$ gives
$$
    \Y_t = \X_0 + \int_0^t \bPhi(s)^{-1} f(s)\,\bv\,ds + \sqrt{\epsilon}\int_0^t \bPhi(s)^{-1} d\W_s,
$$
and thus
$$
    \X_t = \bPhi(t)\X_0 + \bPhi(t)\!\int_0^t \bPhi(s)^{-1} f(s)\,\bv\,ds + \sqrt{\epsilon}\,\bPhi(t)\!\int_0^t \bPhi(s)^{-1}d\W_s.
$$
Taking expectation and using $\bPhi(t)\bPhi(s)^{-1}=e^{\A(F(t)-F(s))}$,
$$
    \muvec_{t|0}(\X_0)
    =\mathbb{E}[\X_t|\X_0]
    = e^{\A F(t)}\X_0 + \int_0^t e^{\A(F(t)-F(s))} f(s)\,\bv\,ds.
$$
With the change of variables $u=F(s)$ (so $du=f(s)\,ds$) this equals
$$
\int_{0}^{F(t)} e^{\A(F(t)-u)}\,du\,\bv
=\Big[-\,e^{\A(F(t)-u)}\A^{-1}\Big]_{u=0}^{F(t)} \bv
= \big(e^{\A F(t)}-I\big)\A^{-1}\bv.
$$
Therefore
$$
\muvec_{t|0}(\X_0) = e^{\A F(t)}\X_0 + \big(e^{\A F(t)}-I\big)\A^{-1}\bv .
$$

The mean $\mu_{t|0}(X_0)$ of the process starting from a given $\X_0$ is given by:
$$
\muvec_{t|0}(\X_0) = e^{\A F(t)} \X_0 + \left( e^{\A F(t)} - I \right) \A^{-1} \bv.
$$

\textbf{Conditional variance.} 
Let $\Z_t:=\X_t-\muvec_{t|0}$ be the centered process. Subtracting the mean SDE from the original SDE yields
$$
    d\Z_t = f(t)\,\A\,\Z_t\,dt + \sqrt{\epsilon}\,d\W_t.
$$
Define the covariance $\Si_{t|0}:=\mathbb{E}[\Z_t \Z_t^\top]$. Using Itô’s rule for $\Z_t\Z_t^\top$,
$$
    d(\Z_t \Z_t^\top) = (d\Z_t)\Z_t^\top + \Z_t(d\Z_t)^\top + d\Z_t(d\Z_t)^\top,
$$
where
\begin{gather}
(d\Z_t)\,\Z_t^\top
= f(t)\A \Z_t \Z_t^\top\,dt + \sqrt{\epsilon}\,d\W_t\,\Z_t^\top,\\
\Z_t\,(d\Z_t)^\top
= f(t)\Z_t \Z_t^\top \A^\top\,dt + \sqrt{\epsilon}\,\Z_t\,d\W_t^\top,\\
(d\Z_t)(d\Z_t)^\top
= \epsilon\, d\W_t d\W_t^\top = \epsilon\, I\,dt.
\end{gather}

Hence
$$
    d(\Z_t \Z_t^\top)
    = f(t)\big(\A \Z_t \Z_t^\top + \Z_t \Z_t^\top \A^\top\big)\,dt
    + \sqrt{\epsilon}\,d\W_t\,\Z_t^\top
    + \sqrt{\epsilon}\,\Z_t\,d\W_t^\top
    + \epsilon\,I\,dt.
$$

and taking expectations gives
$$
    \frac{d}{dt}\Si_{t|0}
    = f(t)\,\A\,\Si_{t|0} + f(t)\,\Si_{t|0}\A^\top + \epsilon\,I, \quad \Si_{t|0} = 0.
$$
To get the cross-covariance, we use:
\[
\Z_t = \sqrt{\epsilon}\int_0^t e^{\A(F(t)-F(s))}\,d\W_s,\qquad
\Z_{t'} = \sqrt{\epsilon}\int_0^{t'} e^{\A(F(t')-F(u))}\,d\W_u,
\]
The Itô isometry yields
\[
\Si_{t,t'|0}
:= \mathrm{Cov}(\Z_t,\Z_{t'}^\top)
= \epsilon\int_0^{t} e^{\A(F(t)-F(s))}\,e^{\A(F(t')-F(s))^\top}\,ds.
\]
For symmetric $\A$, $e^{(\cdot)^\top}=e^{(\cdot)}$ and, since these exponentials commute (all are $e^{\A(\cdot)}$),
\[
e^{\A(F(t)-F(s))}\,e^{\A(F(t')-F(s))}
= e^{\A(F(t')-F(t))}\,e^{2\A(F(t)-F(s))}.
\]
Hence
\[
\Si_{t,t'|0}
= e^{\A(F(t')-F(t))}\,\epsilon\int_0^t e^{2\A(F(t)-F(s))}\,ds
= e^{\A\,[F(t')-F(t)]}\,\Si_{t|0} .
\]

\textbf{Summary.}
Thus, all three components: mean $\muvec_{t|0}(\X_0)$, variance $\Si_{t|0}$, and cross-covariance $\Si_{t, t'|0}$ are derived and can be used further in the same way as in the original case of $f(t) = 1$.

\subsection{Experimental Results}

The continuous and time-decreasing function $f(t)$ sets the increasing values of the matrix $\A$ in the inverse diffusion process when $t \rightarrow 0$. Thus, the correlation of frames with each other in the generated video increases in the last steps of the inference. We conduct a series of experiments to investigate the effect of dynamic correlation and the choice of the function $f(t)$. We use the same experimental setting as described in Appendix~\ref{appendix:implementation_details} of the main paper, with a number of optimizer steps set to $120,000$. The frame interpolation results are presented in Table~\ref{tab:dynamic_correlation}. As can be seen, our experiments do not demonstrate the advantages of using dynamic correlation compared to using constant values of the matrix $\A$. However, we observe significant differences in the quality of the results depending on the function $f(t)$. This demonstrates a complex structure of the relationship between video frames in the diffusion process, which our framework provides in the constant linear approximation.

\begin{table}[H]
  \centering
  \caption{The results of the selection of the function $f(t)$, which determines the inverse dependence of the values of the matrix $\A$ on time $t$. The best values in column are bold, second best values are underlined.}
  \small
    \begin{tabular}{lcccc}
    \hline
    $f(t)$ &
    \textbf{FVD $\downarrow$} &
    \textbf{LPIPS $\downarrow$} &
    \textbf{PSNR $\uparrow$} &
    \textbf{SSIM $\uparrow$} \\
    \hline
    $1 - t$ & 49.569 & 0.115 & \underline{13.938} & \underline{0.752} \\
    $1 - 2t$ & 398.823 & 0.202 & 11.760 & 0.684 \\
    $1 - 0.5t$ & 427.942	& 0.226 & 12.063 & 0.620 \\
    $2\times(1 - t)$ & 409.495 & 0.195 & 12.239 & 0.624 \\
    $0.5\times(1 - t)$ & 269.899 & 0.182 & 12.297 & 0.676 \\
    $(1 - t)^2$ & \underline{43.651} & \underline{0.112} & 13.916 & 0.751 \\
    $e^{-t}$ & 1910.002 & 0.973 & 3.600 & 0.002 \\
    $e^{-2t}$ & 1216.250 & 0.629 & 7.243 & 0.119 \\
    $e^{-4t}$ & 108.052 & 0.142 & 13.186 & 0.688 \\
    $e^{-8t}$ & 50.078 & 0.128 & 13.321 & 0.726 \\
    \hline
    Constant ($f(t) = \alpha = 1$) & \textbf{36.309} & \textbf{0.097} & \textbf{14.515} & \textbf{0.772} \\
    \hline
    \end{tabular}\label{tab:dynamic_correlation}
\end{table}

\newpage
\section{Additional Examples}\label{appendix:examples}

\subsection{Frame Interpolation}

\begin{figure}[h!]
  \centering
    \includegraphics[width=0.8\linewidth]{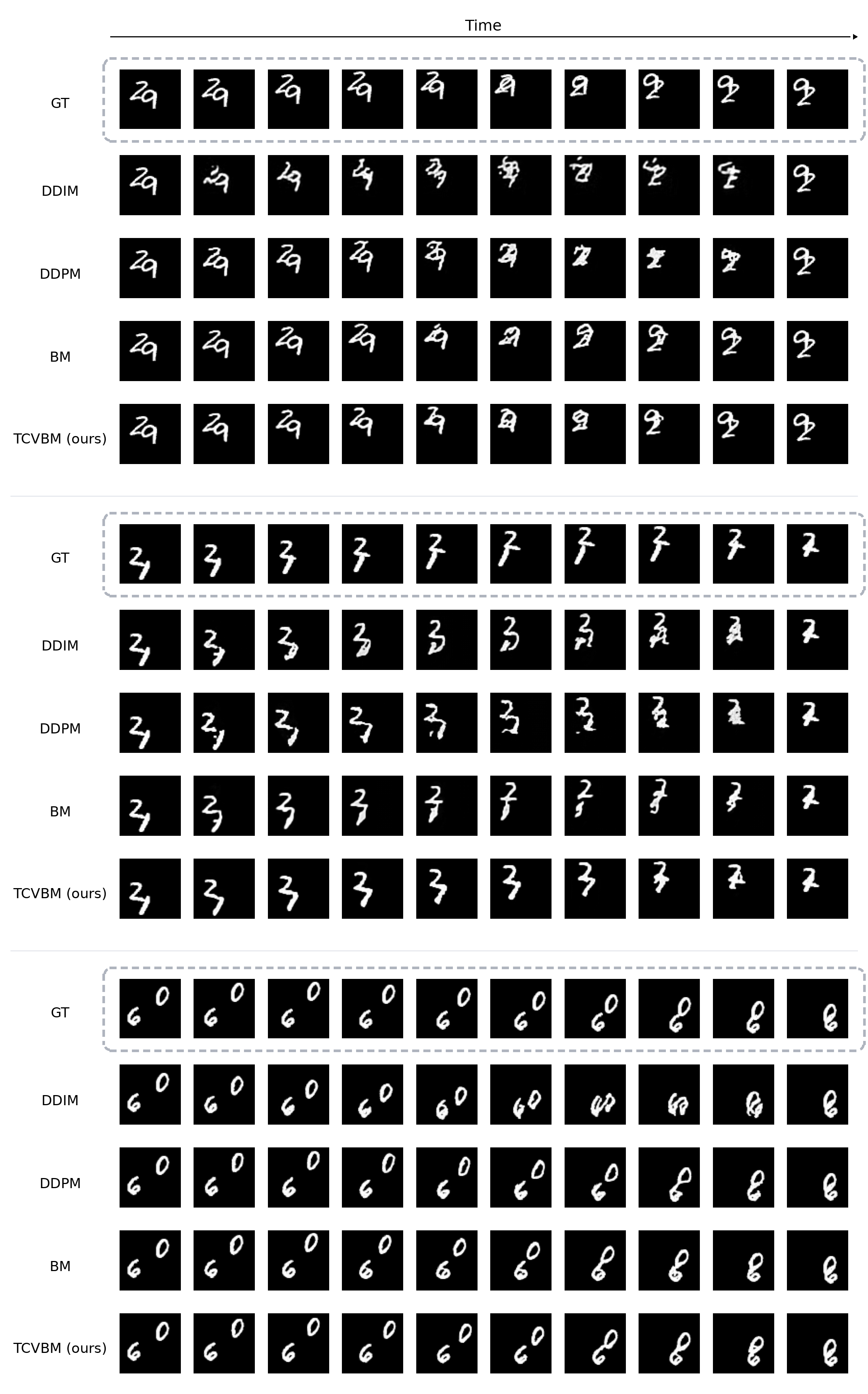}
    \caption[]{Additional frame interpolation results.}
\end{figure}

\newpage
\subsection{Image-to-Video Generation}

\begin{figure}[h!]
  \centering
    \includegraphics[width=0.8\linewidth]{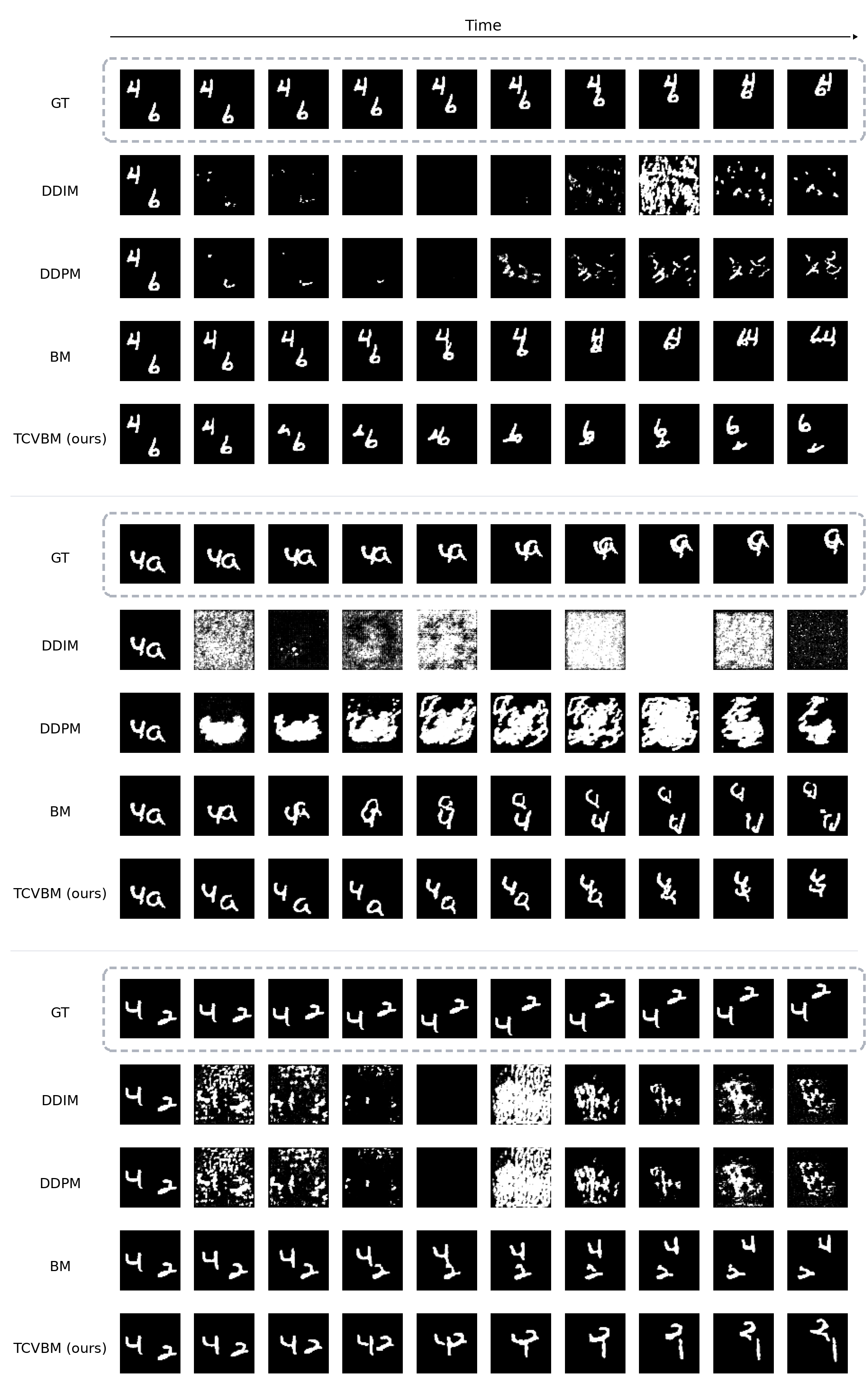}
    \caption[]{Additional image-to-video results.}
\end{figure}

\newpage
\subsection{Video Super Resolution}

\begin{figure}[h!]
  \centering
    \includegraphics[width=0.75\linewidth]{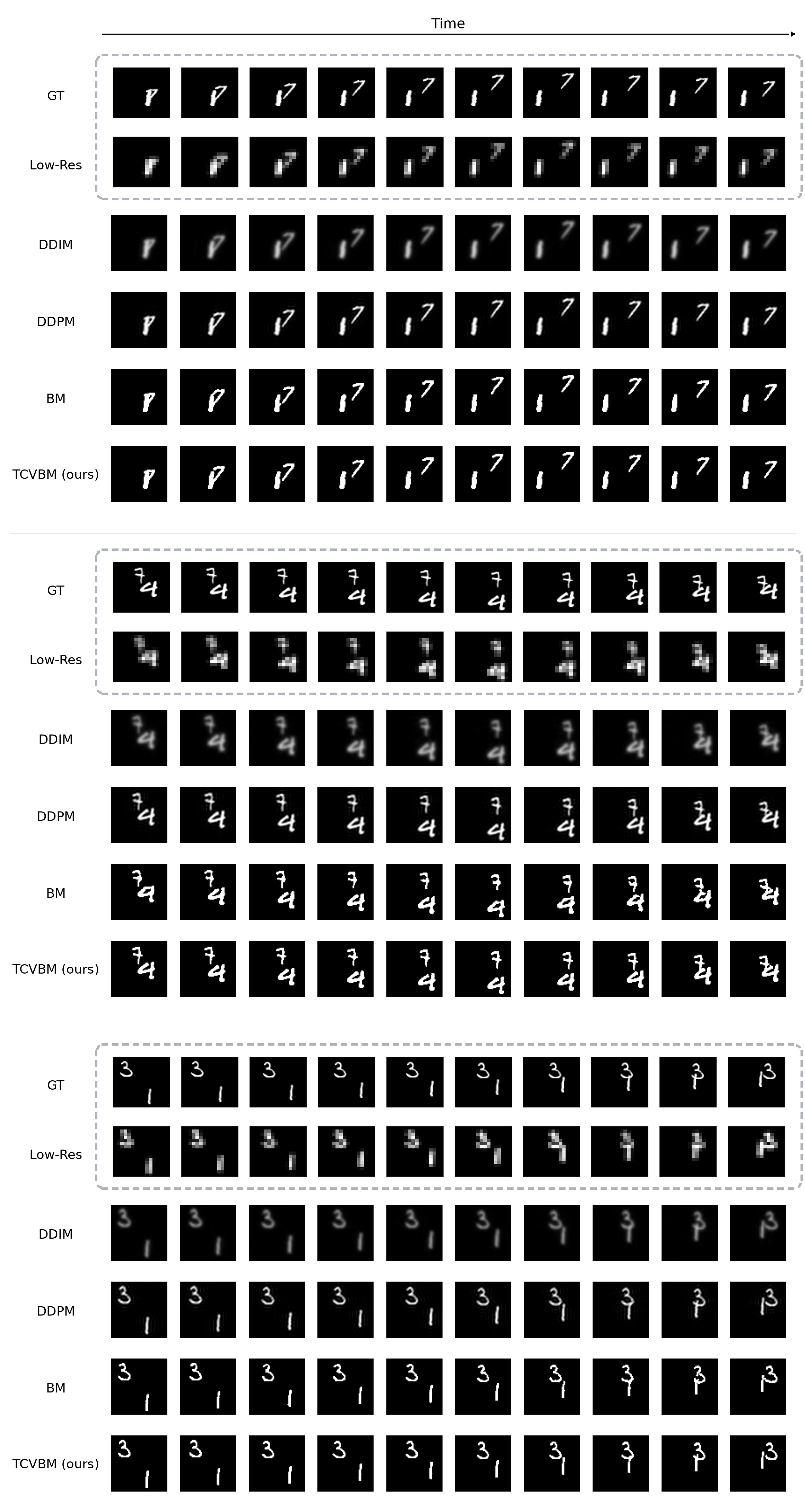}
    \caption[]{Additional video super resolution results. The resolution is increased from $16 \times 16$ to $64 \times 64$.}
\end{figure}

\end{document}